\selectfont\color{white!64!black},
\newcommand{\CodePunct}[1]{\textcolor{gray!80!black}{\texttt{#1}}}
\lstdefinelanguage{SyGuS}{
  alsoletter={0, 1, +, -, *, =, <, >},
  literate={(}{{\CodePunct{(}}}1
           {)}{{\CodePunct{)}}}1,
  texcl=true, 
  morecomment=[l]{;},
  morestring=[b]",
  keywords=[1]{NIA, LIA},
  keywords=[2]{set-logic, define-fun, declare-var, check-synth, synth-fun, constraint},
  keywords=[3]{Constant, Variable},
  keywords=[4]{Int, Bool},
  keywords=[5]{0, 1, 2, 1000, true, false},
  keywords=[6]{+, -, *, =, <, >, >=, <=, div, mod},
  keywords=[7]{and, or, not, ite, =>}
}
\newcommand{\redacted}[1]{$\langle$\,\emph{redacted for double-blind reviewing}\,$\rangle$}
\newcommand{\tool}[1]{\textsc{#1}\xspace}
\newcommand{\oasis}{\tool{Oasis}}
\newcommand{\lig}{\tool{LoopInvGen}}
\newcommand{\cvc}{\tool{CVC4}}
\newcommand{\dsynth}{\tool{DradSynth}}
\newcommand{\oasiscl}{\tool{Oasis, Naive Vars Select}}
\newcommand{\oasiscomp}{\tool{Oasis, No Vars Select}}
\newcommand{\oasisdt}{\tool{Oasis+DT}}
\newcommand{\oasiscomplete}{\tool{Oasis, Complete Maps}}
\newcommand{\oasisact}{\tool{Oasis, No  \AlgoStyle{\RelInfer}}}
\newcommand{\oasisopt}{\tool{Oasis, No Optimization}}
\newcommand{\unconfounded}{Unconfounded}
\newcommand{\confoundedone}{Confounded1}
\newcommand{\confoundedfive}{Confounded5}
\newcommand{\sygus}{Sygus 2018}
\newcommand{\learn}{Learn}
\newcommand{\LearnClassifier}{\learn}
\newcommand{\FindPosCounterExample}{FindPosCounterExample}
\newcommand{\FindNegCounterExample}{FindNegCounterExample}
\newcommand{\RelInfer}{RelInfer}
\newcommand{\pseudo}[1]{\textbf{\textsf{#1}}}
\newcommand{\sname}[1]{\textsf{#1}\xspace}
\newcommand{\syguscomp}[1][]{\sname{SyGuS-Comp#1}}
\newcommand{\program}{\ensuremath{\textsf{\textbf{P}}}\xspace}
\newcommand{\vc}[1]{\ensuremath{\text{VC}_\textsf{#1}}}
\newcommand{\integer}{\ensuremath{\mathbb{Z}}}
\newcommand{\intloopinv}{\ensuremath{\mathcal{H}_\textsc{CNF}}}
\newcommand{\true}{\ensuremath{\textsf{True}}}
\newcommand{\false}{\ensuremath{\textsf{False}}}
\newcommand{\pstate}{\ensuremath{\sigma}}
\newcommand{\ind}[1]{\ensuremath{\vmathbb{1}\left\{#1\right\}}}
\newcommand{\ii}{\ensuremath{n}}
\newcommand{\ci}{\ensuremath{c}}
\newcommand{\di}{\ensuremath{d}}
\newcommand{\nii}{\ensuremath{N}}
\newcommand{\nc}{\ensuremath{C}}
\newcommand{\nd}{\ensuremath{D}}
\newcommand{\w}{\ensuremath{\mathbf{w}}}
\newcommand{\x}{\ensuremath{\mathbf{x}}}
\newcommand{\z}{\ensuremath{z}}
\newcommand{\ylor}{\ensuremath{y^\lor}}
\newcommand{\inv}{\ensuremath{\mathcal{I}}}
\newcommand{\pre}{\ensuremath{\rho}}
\newcommand{\post}{\ensuremath{\phi}}
\newcommand{\Classifier}{\ensuremath{\mathcal{C}}}
\newcommand{\PState}{\ensuremath{\vec{\sigma}}}
\newcommand{\Assign}[2]{\ensuremath{(#1 \mapsto #2)}}
\newcommand{\NegPStates}{\ensuremath{\bm{\PState_{-}}}}
\newcommand{\PosPStates}{\ensuremath{\bm{\PState_{+}}}}
\NewDocumentCommand
  {\Vars}
  { O{} }
  {\ensuremath{\vec{x}\mkern0.5mu_\text{\smaller#1}}}
\NewDocumentCommand
  {\InvRel}
  { d() }
  {\ensuremath{\mathcal{I}\IfValueT{#1}{(#1)}}}
\NewDocumentCommand
  {\PreRel}
  { O{} d() }
  {\ensuremath{\textsc{\large\sffamily pre}_\text{\smaller\kern0.1pc#1}\IfValueT{#2}{(#2)}}}
\NewDocumentCommand
  {\TransRel}
  { O{} d() }
  {\ensuremath{\textsc{\large\sffamily trans}_\text{\smaller\kern0.1pc#1}\IfValueT{#2}{(#2)}}}
\NewDocumentCommand
  {\PostRel}
  { O{} d() }
  {\ensuremath{\textsc{\large\sffamily post}_\text{\smaller#1}\IfValueT{#2}{(#2)}}}
\newcommand{\triple}[3]{\ensuremath{\left\{#1\right\}\ #2\ \left\{#3\right\}}}
\let\oldequiv\equiv
\renewcommand\equiv{\mathrel{\raisebox{0.5pt}{$\oldequiv$}}}
\let\oldimplies\implies
\renewcommand\implies{\mathrel{\resizebox{1.5em}{0.425em}{$\oldimplies$}}}
\begin{document}

\title{On Scaling Data-Driven Loop Invariant Inference}

\author{Sahil Bhatia}
\affiliation{\institution{Microsoft Research, India}}
\email{t-sab@microsoft.com}
\author{Saswat Padhi}
\affiliation{\institution{University of California Los Angeles, USA}}
\email{padhi@cs.ucla.edu}
\author{Nagarajan Natarajan}
\affiliation{\institution{Microsoft Research, India}}
\email{nagarajn@microsoft.com}
\author{Rahul Sharma}
\affiliation{\institution{Microsoft Research, India}}
\email{rahsha@microsoft.com}
\author{Prateek Jain}
\affiliation{\institution{Microsoft Research, India}}
\email{prajain@microsoft.com}


\begin{abstract}
    Automated synthesis of inductive invariants is an important problem in software verification.
    Once all the invariants have been specified, software verification reduces to checking of verification conditions.
    Although static analyses to infer invariants have been studied for over forty years,
    recent years have seen a flurry of data-driven invariant inference techniques
    which guess invariants from examples instead of analyzing program text.
    However, these techniques have been demonstrated to scale only to programs with a small number of variables.
    In this paper, we study these scalability issues and address them in our tool \oasis\
    that improves the scale of data-driven invariant inference and outperforms state-of-the-art systems on benchmarks from the invariant inference track of the Syntax Guided Synthesis competition.
\end{abstract}
\maketitle

\section{Introduction}%
\label{sec:introduction}

Inferring inductive invariants is one of the core problems of software verification.
Recently, there has been a flurry of data-driven invariant inference techniques~\citep{dig,vueq,iac,esop,geometric,ice,c2i,icedt,ddchc,pie,eusolver,selective,posthat} that learn invariants from examples.
These data-driven techniques
offer attractive features such as the ability to systematically generate {\em disjunctive} invariants.
Whereas the well-know static invariant inference techniques either fail to infer disjunctive invariants~\cite{cousot77,polyhedra,octagon,sting} or require a user-provided bound on the number of disjunctions~\cite{bagnara,invgen,ssd,popl07,pldi08}. 
At the heart of the data-driven techniques is an {\em active learning}~\cite{hanneke2009theoretical} loop: a {\em learner} guesses a candidate invariant from data and provides the candidate to a {\em teacher}. The teacher either validates that the candidate is a valid invariant or returns a counterexample. This example is added to the data and the process is repeated until the learner guesses a correct invariant. In this architecture, the more the number of program variables in the verification problem, the more the learner is likely to choose an incorrect candidate or take a long time to generate good candidates~\cite{piehe}. 
Hence, as we discuss in our evaluation in \Cref{sec:evaluation}, existing data-driven invariant inference techniques have been shown to be effective only for programs with a small number of variables.

For data-driven invariant inference to be applicable to verification of practical software, these scalability challenges must be addressed.
There are two main obstacles to scalability which are related to the number of program variables:
First, a program can have many variables and often only a small subset of these variables are {\em relevant} to the invariants. Intuitively, writing correct programs that require complicated invariants with many variables is hard for developers and prior works on invariant inference are also biased towards simple invariants~\cite{beautiful}.
In the absence of a technique that separates the relevant variables from the irrelevant, the learner can get bogged down by the irrelevant variables. 
In particular, invariant inference benchmarks in the Syntax Guided Synthesis (SyGuS) competition are provided as logic formulas where static {\em slicing}~\cite{slice} fails to remove the semantically irrelevant variables.
Second, data-driven techniques also rely on some form of enumeration to generate candidate predicates. Thus, a higher number of variables causes the enumerator to take a long time to reach pertinent candidates. For example, if the enumerator exhaustively generates expressions in increasing size~\cite{eusolver,piehe} then before enumerating expressions of size $s$, it must enumerate all expression of size $s-1$ over all variables.

To exemplify these scalability issues, we consider \lig~\cite{pie,piehe}, the state-of-the-art data-driven invariant inference tool that 
won in the invariant inference track of SyGuS competition held in 2017 and 2018. It uses exhaustive enumeration to synthesize Boolean {\em features} (simple predicates). Then, a Boolean function learner generates a candidate invariant which is a Boolean combination of the features. As the number of program variables increases, the scalability degrades because the enumerator must explore exponentially many features and the learner needs many examples to avoid generating candidates with irrelevant variables; with a large number of variables, the learner can {\em overfit} on the irrelevant variables to generate incorrect candidates that will be rejected by the teacher \cite{piehe}.

We explore addressing both the scalability issues, caused by enumeration and irrelevant variables, through machine learning (ML). In particular, we make the following two contributions. First, we describe a learner that can infer the relevant variables, thus ensuring that data-driven invariant inference is only applied to the simpler problem with a few or no irrelevant variables. Since the number of relevant variables is typically small, data-driven invariant inference can scale to such tasks better. Second, we show that exhaustive enumeration can be replaced by learners that are much more scalable.
Instead of a {\em generate-and-check} approach where an enumerator generates all possible candidate features eagerly~\cite{daikon,escher,pie}, we employ a more scalable {\em guess-and-check} approach where the learner intelligently guesses features from data. 

We have implemented these techniques in a tool 
\oasis{}\footnote{The name \oasis{} stands for \textbf{O}ptimization \textbf{A}nd \textbf{S}earch for \textbf{I}nvariant \textbf{S}ynthesis.}
that takes as input logic formulas which encode the verification of safety properties of programs over integer variables and outputs inductive invariants that are sufficient to prove the properties. To this end, \oasis employs new ML algorithms for the well-known {\em binary classification} problem: the learner's goal is to find a  {\em classifier} that {\em separates} positive and negative examples. The classifier is  a predicate that includes the positive examples and excludes the negative examples. 
In the context of invariant inference, an {\em example} is a program {\em state} that maps variables to integers.
\oasis makes the following contributions.

First, \oasis uses binary classification to infer relevant and irrelevant variables (\Cref{sec:overview.refinement}). It uses symbolic execution to generate {\em reachable} states (positive examples) and {\em bad} states (negative examples), which are backward reachable from states that violate the safety properties.
Then it finds a {\em sparse} classifier and we classify the variables occurring in the classifier as relevant. If a variable is absent from the classifier and it is possible to separate samples of reachable states from bad states without using the variable then it is likely to be irrelevant to the invariant. The sparsity requirement ensures that we keep the number of relevant variables to  minimal.

We remark that we need a custom learner for this task. Each state (reachable or bad) is a partial map from variables to integers. In particular, there are some variables that are not in the domain of the state. These variables are {\em don't care}, \ie, they can be assigned any value without affecting the {\em label} (positive or negative) of the state. The (partial) models generated by SMT solvers typically have don't cares. The well-known ML classifiers learn over total maps as opposed to partial maps. Although one can extend a partial map to a total map by setting the don't care variables to zero or randomly assigned values, these alternatives are undesirable as a partial map corresponds to an infinite number of possible total maps and supplanting it with any total map loses the information encoded in the partial map. Hence, we have designed a custom learner that  directly learns a classifier using partial maps and is not limited to total maps. After we obtain the set of relevant variables from the classifier, \oasis calls a modified version of \lig where the synthesis of  features is restricted to predicates over the relevant variables.

Second, \oasis uses a learner to synthesize Boolean features from data (\Cref{sec:overview.inference}). Internally, \lig breaks down the problem of invariant inference into many small binary classification tasks and uses Escher~\cite{escher} to find features that solve them. Specifically, Escher exhaustively enumerates all features in increasing size till it finds one that separates the positive examples from the negative examples in the small task. \oasis replaces Escher with a learner to find such features. Unlike traditional ML algorithms that have non-zero {\em error}, \ie, they fail to separate some positive examples from some negative examples, \lig requires the feature synthesizer to have zero error.

\oasis uses the same learner to solve both these problems, \ie, inferring relevant variables and inferring features. In particular, the learner of \oasis solves a non-standard ML problem: finding sparse classifiers with zero error in the presence of don't cares.  
We describe a novel learner that solves this problem (\Cref{sec:learning}). To the best of our knowledge, all prior works on data-driven invariant inference use learners that require total maps. We show how to encode the problem of finding such a  classifier  as an instance of integer linear programming (ILP) which minimizes an objective function subject to linear constraints. Although linear programming has previously been used to assist invariant inference~\cite{invgen}, our encoding is novel. Specifically, we show how to systematically encode domain-specific heuristics as objective functions or constraints for effective learning in the context of invariant inference. Heavily optimized ILP solvers are available as off-the-shelf tools and \oasis uses them to scale data-driven invariant inference.  

To demonstrate the scalability of \oasis in practice, we evaluate \oasis on over 400 benchmarks from the invariant (Inv) track of the SyGuS competion held in 2019~\cite{syguscomp19} (\Cref{sec:evaluation}). This benchmark set includes the new community provided programs that have a large number of irrelevant variables which test the scalability of invariant synthesis tools~\cite{code2inv}. Our evaluation shows that \oasis significantly improves the scalability of data-driven invariant inference on these benchmarks and solves 20\% more benchmarks than \lig, the state-of-the-art data-driven invariant inference tool. 
\oasis even outperforms 
state-of-the-art invariant inference tools that are based on very different techniques. It solves more benchmarks than deductive synthesis implemented in CVC4~\cite{cvc4,cegqi}
and cooperative synthesis of the recent work of {\sc DryadSynth}~\cite{dryadsynth} that combines enumerative and deductive synthesis.
Thus, our evaluation shows that \oasis significantly improves the state-of-the-art in data-driven invariant inference and  makes it  as scalable as deductive and cooperative techniques.
\oasis solves more benchmarks than these tools and also solves benchmarks that are beyond the reach of prior work. 

The rest of the paper is organized as follows. We provide an example to show the end-to-end working of \oasis{} (\Cref{sec:example}) and review the relevant background (\Cref{sec:background}).
We describe \oasis in detail (\Cref{sec:overview})
followed by the  ILP-based learner (\Cref{sec:learning}).
We evaluate \oasis in \Cref{sec:evaluation}, place it in the context of the landscape of invariant inference techniques in \Cref{sec:related}, and conclude with  directions for future work in \Cref{sec:conclusions}.

\section{Working Example}%
\label{sec:example}

We use a simple (contrived) benchmark to show the working of each component of \oasis. 
The goal is to synthesize an inductive invariant $I(\vec{x})$, where $\vec{x}=\langle i,j,k,n,y\rangle$ that satisfies the following {\em verification conditions} (VCs) expressed as Horn Clauses.
\begin{eqnarray*}
\mathit{Pre}(\vec{x})\Rightarrow I(\vec{x})\mbox{ with } \mathit{Pre}(\vec{x}) \triangleq i=j=0\wedge k\geq 0\wedge n\geq 0 \\
I(\vec{x})\wedge \mathit{Trans}(\vec{x},\vec{x}')\Rightarrow I(\vec{x}')\mbox{ with }\mathit{Trans}(\vec{x},\vec{x}')\triangleq i\leq n\wedge i'=i+1\wedge j'=j+1\wedge y'=i\times j\\
I(\vec{x})\Rightarrow \mathit{Post}(\vec{x})\mbox{ with } \mathit{Post}(\vec{x})\triangleq i\leq n\vee i+j+k\geq 2n\vee y\geq n^2
\end{eqnarray*}
These VCs encode the verification of the C-program in Figure~\ref{fig:motivating-example}. If there exists a predicate $I$ that satisifes the VCs then for all possible inputs the assertion can never be violated. A {\em state} for this example is a 5-tuple that maps $i,j,k,n,y$ to integers or don't cares (denoted by $\top$).

\begin{figure}%
    \vspace{-1.25em}\centering%
    \begin{algorithmic}[1]\linespread{1.05}\selectfont
        \Assume $( k\geq 0\wedge n\geq 0 )$
        \State $i = j  = 0$
        \While {$( i \leq n )$}
            \State $(i, j, y) \gets (i + 1, j + 1, i\times j)  $
        \EndWhile
        \Assert $( i+j+k\geq 2n\vee y \geq n^2)$
    \end{algorithmic}
    \caption{The C-program of working example.}
    \label{fig:motivating-example}
\end{figure}

The first step is the identification of irrelevant variables. \oasis generates reachable states, i.e., positive examples by computing satisfying assignments of $\mathit{Pre}(\vec{x})$ and $\mathit{Pre}(\vec{x})\wedge\mathit{Trans}(\vec{x},\vec{x'})$. For bad states, i.e., negative examples, \oasis computes satisfying assignments of $\neg\mathit{Post}(\vec{x})$ and 
$\neg\mathit{Post}(\vec{x}')\wedge \mathit{Trans}(\vec{x},\vec{x}')$. 
These satisfying assignments are obtained from off-the-shelf SMT solvers and result in Table~\ref{tab:initdata}.

\begin{minipage}{.4\textwidth}
\begin{table}[H]
\begin{tabular}{cccccl}
\toprule
$i$ & $j$ & $k$ & $n$ & $y$ & $\ell$ \\ 
\midrule
0 & 0 & 0 & 0 & $\top$ & 1\\
2 & 3 & 0 & 1 & 2 & 1\\
1 & -1 & 0 & 0 & -1 & 0\\
6 & 4 & 0 & 5 & 15 & 0\\
\bottomrule
\end{tabular}
\caption{Initial symbolic execution data.}
\label{tab:initdata}
\end{table}
\end{minipage}
\begin{minipage}{.5\textwidth}
\begin{table}[H]
\begin{tabular}{cccccl}
\toprule
$i$ & $j$ & $k$ & $n$ & $y$ & $\ell$ \\ 
\midrule
0 & 0 & -2 & -1 & 0 & 0\\
2 & 3 & -3 & 1 & 0 & 0\\
\bottomrule
\end{tabular}
\caption{Additional data from robustness checking.}
\label{tab:ssdata}
\end{table}
\end{minipage}

\begin{table}[H]
\begin{tabular}{ccccl}
\toprule
$i$ & $j$ & $k$ & $n$ & $\ell$ \\ 
\midrule
1 & 1 & 742 & 0 & 1\\
0 & 0 & 0 & 859 & 1\\
-2 & -2 & 0 & -2 & 0\\
-3 & -3 & 1 & -3 & 0\\
\bottomrule
\end{tabular}
\hfill
\begin{tabular}{ccccl}
\toprule
$i$ & $j$ & $k$ & $n$ & $\ell$ \\ 
\midrule
0 & 0 & 21 & 0 & 1\\
1 & 1 & 115 & 38 & 1\\ 
5 & 15 & 0 & 5 & 1\\
5 & 0 & 1 & 4 & 0\\
6 & 1 & 0 & 4 & 0\\
\bottomrule
\end{tabular}
\hfill
\begin{tabular}{ccccl}
\toprule
$i$ & $j$ & $k$ & $n$ & $\ell$ \\ 
\midrule
0 & 0 & 21 & 0 & 1\\
1 & 1 & 115 & 38 & 1\\
373 & 374 & -3 & 372 & 0\\
\bottomrule
\end{tabular}
\caption{Classification problems generated by \lig.}
\label{tab:ligdata}
\end{table}

Here, the label $\ell=1$ corresponds to positive examples and $\ell=0$ corresponds to negative examples. Our learner outputs $i\leq j$ as the classifier for the binary classification problem in Table~\ref{tab:initdata}. We run \lig with $\vec{r}=\{i,j\}$ as relevant variables and the rest of the variables marked irrelevant. Note that this set of relevant variables is incorrect and this instance of \lig will fail. In parallel to \lig, we continue improving our set of relevant variables.

The predicate $i\leq j$ {\em separates} the positives from the negatives: it includes, \ie, is {\it true} for all positive examples and excludes, \ie, is {\it false} for all negative examples in the data. Ideally, we want the classifier to {\em generalize} well: it should not happen that if we generate few more examples then the classifier can no longer separate the positives from the negatives.
Next, we check for the {\em robustness} of this separator by checking for existence of positive states that it excludes or negative states that it includes.  The former are generated via satisfying assignments of $\mathit{Pre}(\vec{x})\wedge\mathit{Trans}(\vec{x},\vec{x}')\wedge i'>j'$ and the latter from $\neg\mathit{Post}(\vec{x}')\wedge\mathit{Trans}(\vec{x},\vec{x}')\wedge i\leq j$ and are shown in Table~\ref{tab:ssdata}. Note that no new positive examples are added in this step as the former predicate is unsatisfiable. 

Next, we use the learner to find a classifier using the data in Table~\ref{tab:initdata} and Table~\ref{tab:ssdata}. We repeat these steps till an instance of \lig succeeds. Here, these iterations end at $i\leq j+k\wedge i\leq n+1$ which labels $i,j,k,n$ as relevant and $y$ as irrelevant. Note that any syntactic slicing-based technique would mark $y$ as relevant but the semantic data guides our learner to determine the irrelevance of $y$. Next, we show how \lig (with our improvements) successfully infers an invariant $I$ with this set of relevant variables. 
 
 \lig breaks down the process of finding $I$ into two steps. First, it creates many small binary classification problems. For each such problem, a {\em feature synthesizer} generates a {\em feature} that separates the positives from the negatives. Second, the features are combined together using a Boolean function learner to generate a candidate invariant. \lig repeats these steps till a predicate that satisfies all the VCs is discovered. For our example, \lig generates the classification problems in Table~\ref{tab:ligdata} (See~\cite{pie} for how \lig generates these problems). 
 Our contribution lies in using our learner to find features for each of these problems rather than using \lig's exhaustive enumeration based feature synthesizer.
 Here, our learner generates the following features for these three problems: $i\geq 0$, $i\leq j$, and $k\geq 0$. 
The Boolean function learner combines these features to generate the following candidate invariant $i\geq 0\wedge i\leq j\wedge k\geq 0$
that satisfies all the VCs.
This inductive invariant shows that the assertion in Figure~\ref{fig:motivating-example} holds for all possible inputs.
\section{Background}%
\label{sec:background}

In this section, we formally define the problem of verifying correctness of programs using loop invariants, describe how invariant inference can be considered as a binary classification problem, and then describe how \lig reduces this classification problem to many small binary classification problems.

\subsection{Program Verification and loop invariants}%
\label{sec:background.verification}

The first step in program verification is defining a \emph{specification} for the desired property.
Typically~\citep{web/sv-comp,web/sygus-comp} this is provided as a pair of logical formulas ---
\begin{andlist}
    \item a \emph{precondition} that constrains the initial state of the program
    \item a \emph{postcondition} that validates the final state after execution of the program
\end{andlist}.
Many programming languages support the \pseudo{assume} and \pseudo{assert} keywords,
where $\pseudo{assume}(\phi)$ silently halts executions that satisfy $\neg\phi$ and executing $\pseudo{assert}(\phi)$ with a state that satisfies $\neg\phi$ raises an exception.
For example, \cref{fig:motivating-example} shows a program having a loop where the initial values are specified by initializations/\pseudo{assume} statements and the postcondition is specified using the \pseudo{assert} in the last line. 
Given such a specification, we define the verification problem as:

\begin{definition}[Program Verification]\label{def:verification}%
    Given a program \program{} and a specification consisting of a pair of formulas ---
    a precondition \pre{} and a postcondition \post{},
    the verification problem is to prove that
    for all executions starting from states that satisfy \pre, the states obtained after executing \program{} satisfy \post.
\end{definition}

In Floyd-Hoare logic (FHL)~\citep{floyd,hoare},
this problem is abbreviated to the formula $\triple{\pre}{\program}{\post}$, called a \emph{Hoare triple}.
We say that a Hoare triple is \emph{valid} if the correctness of \program{} can be provably demonstrated.
For example, while $\triple{x < 0}{\texttt{y $\gets$ -\,x}}{y > 0}$ is valid,
$\triple{x < 0}{\texttt{y $\gets$ x + 1}}{y < 0}$ is not. 
FHL offers initial theoretical underpinnings for automatic verification
by providing a set of inference rules that can be  used on the program structure.
Today, state-of-the-art verification tools have mechanized these rules and apply them automatically.

However, the FHL inference rules can automatically be applied only for validating Hoare triples that are defined on loop-free programs.
Applying these rules on a loop requires an additional parameter called a \emph{loop invariant} ---
a predicate over the program state that is preserved across each iteration of the loop.
To establish the validity of a Hoare triple,
the FHL require a loop invariant to satisfy three specific properties,
and a predicate that satisfies all three is called a \emph{sufficient loop invariant}.

\begin{definition}[Sufficient Loop Invariant]\label{def:invariant}%
    Consider a simple loop, $\pseudo{while}\ \,G\ \,\pseudo{do}\ \,S$,
    which executes the statement $S$ until the condition (loop guard) $G$ holds and then it halts.
    Then, for the Hoare triple $\triple{\pre}{\pseudo{while}\ \,G\ \,\pseudo{do}\ \,S}{\post}$ to be valid,
    there must exist a predicate \inv{} that satisfies:
    \begin{enumerate}[leftmargin=28mm, itemsep=0.3125em, topsep=0.25em]
        \item[$\vc{pre}$\,:\hspace{1em}] $\pre \implies \inv$, \ie,\ \inv\ must hold immediately before the loop
        \item[$\vc{ind}$\,:\hspace{1em}] $\triple{G \wedge \inv}{S}{\inv}$, \ie\ \inv\ must be inductive (hold after each iteration)
        \item[$\vc{post}$\,:\hspace{1em}] $\,  \inv \implies G\vee \post$, \ie,\ \inv\ must certify the postcondition upon exiting the loop
    \end{enumerate}
    These three properties are called the \emph{verification conditions} (VCs) for the loop.
    Any predicate \,\inv{} that satisfies the first two VCs is called a \emph{loop invariant}.
    A loop invariant that also satisfies the third VCs said to be \emph{sufficient} (for proving the correctness of the Hoare triple).
    In this paper, we use {\em invariants} to denote sufficient loop invariants for brevity.
\end{definition}

Thanks to efficient theorem provers~\citep{z3,cvc4},
today it is possible to automatically check if a given predicate is indeed an invariant.
However, automatically finding an invariant for arbitrary loops is undecidable in general,
and even small loops are challenging for state-of-the-art tools.
The invariant inference track of the syntax guided synthesis competition has hundreds of benchmarks where each benchmark provides a $\vc{pre}$, a $\vc{ind}$, and a $\vc{post}$
as logical formulas. Different tools compete to {\em solve} these problems, \ie, to infer the invariants every year.

\subsection{Data-Drive Invariant Inference}%
\label{sec:background.classification}

An invariant can be viewed as a zero-error classifier ---
it should demonstrate that the set of possible reachable states at the entry to a loop (called \emph{loop-head states}) are disjoint
from the bad states that violate the postcondition;
thus establishing that the postcondition is satisfied for all executions.

\begin{figure}%
    \vspace{-0.3125em}\centering%
    \resizebox{0.5\linewidth}{0.3\linewidth}{%
        \begin{tikzpicture}[
            dot/.style={circle,inner sep=1.25pt,fill=cyan!50!white,draw=gray!32!black,name=#1},
            dash/.style={circle,inner sep=0pt,name=#1,label={\textcolor{red!85!white}{\textbf{--}}}},
            arrow/.style={-Latex, draw=gray!80!black}
        ]
            \draw[fill=red!5!white, draw=none]
                (-2.75,-1.75) rectangle (2.75,2.125)
                node[anchor=south east, yshift=-5mm, xshift=-0.125mm]
                {\textcolor{red!64!black}{\small $\bm{\neg\, G \wedge \neg\, \post}$}};

            \draw[thick, fill=white, draw=red!64!black]
                (0,0) ellipse (2cm and 16mm);
            \draw[thick, dashed, fill=blue!16!white, draw=blue!64!black]
                (-0.125,-0.325) ellipse (15mm and 11mm)
                node[anchor=west, xshift=9mm, yshift=-2mm]
                {\textcolor{blue!64!black}{\small $\bm{\inv}$}};
            \draw[thick, fill=green!16!white, draw=green!50!black]
                (-0.75,-0.1875) circle (6mm)
                node[anchor=center]
                {\textcolor{green!50!black}{$\bm{\pre}$}};
            
            \node [dot=A1] at (-0.5,0.125) {};
            \node [dot=A2] at (0.25,-0.25) {};
            \node [dot=A3] at (0.5,0.25) {};
            \node [dot=A4] at (1,0.0625) {};

            \draw [arrow] (A1) -- (A2);
            \draw [arrow] (A2) -- (A3);
            \draw [arrow] (A3) -- (A4);

            \node [dot=B1] at (-1.125,-0.35) {};
            \node [dot=B2] at (-0.75,-0.625) {};
            \node [dot=B3] at (0.4,-0.75) {};
            \node [dot=B4] at (-0.7,-1) {};
            \node [dot=B5] at (0,-1.25) {};

            \draw [arrow] (B1) -- (B2);
            \draw [arrow] (B2) -- (B3);
            \draw [arrow] (B3) -- (B4);
            \draw [arrow] (B4) -- (B5);

            \node [dash=N1] at (-2.35,0.25) {};
            \node [dash=N2] at (-2.45,-1.25) {};
            \node [dash=N3] at (-1.6,-1.65) {};
            \node [dash=N4] at (-0.9,1.45) {};
            \node [dash=N6] at (-2.4,1.7) {};
            \node [dash=N7] at (-1.9,1.1) {};
            \node [dash=N8] at (-2.2,-0.5) {};

            \node [dash=N1x] at (2.35,0.25) {};
            \node [dash=N2x] at (2.45,-1.25) {};
            \node [dash=N3x] at (1.6,-1.65) {};
            \node [dash=N4x] at (0.9,1.45) {};
            \node [dash=N5x] at (0,1.7) {};
            \node [dash=N7x] at (1.9,1.1) {};
            \node [dash=N8x] at (2.2,-0.5) {};
        \end{tikzpicture}}
    \caption{A sufficient loop invariant can be viewed as a classifier for states.}%
    \label{fig:classification}
\end{figure}

Consider verifying our motivating example from \cref{fig:motivating-example}.
We visualize the classification problem in \cref{fig:classification}.
\vc{pre} and \vc{ind} from \cref{def:invariant} require \inv{} (\textcolor{blue!64!black}{dashed blue ellipse})
to capture all possible loop-head states (\textcolor{cyan!32!black}{cyan dots}).
These include states satisfying the precondition \pre{} (\textcolor{green!40!black}{green circle}),
\eg, $(i = j = 0, n = 2)$ appearing before the first iteration,
and the subsequent states after each iteration (indicated by the arrows),
\eg, $(i = j = 1, n = 2, y = 0)$, $(i = j = 2, n = 2, y = 1)$ etc.
The $\neg G \wedge \neg \post$ space (\textcolor{red!64!black}{red rectangle}) denotes the states
violating the postcondition, \eg, $(i = j = 2, k = 0, n = 1, y = -1)$.
\vc{post} forces \inv{} to be disjoint with this space.
An invariant \inv{} that satisfies the VCs guarantees that no execution starting from $\pre$ would terminate at a state that violates the desired postcondition \post{}.

To infer invariants, we can label examples of loop-head states as positive and satisfying assignments of $\neg G\wedge \neg\post$ as negative and use a classification algorithm to separate these. The output classifier is a candidate invariant. If the candidate satisfies all the VCs then we have succeeded in inferring an invariant. If some VC is violated then SMT solvers can produce counterexamples which can be added to positive or negative examples to generate another candidate. 
Since the actual invariant can be complex, prior work has explored increasingly complex learning algorithms including support vector machines~\cite{iac,selective}, decision trees~\cite{icedt,ddchc}, algorithms for learning Boolean combinations of half-spaces~\cite{ssd}, Metropolis-Hastings sampling~\cite{c2i}, Gibbs sampling~\cite{popl07}, SMT-based constraint solving~\cite{ice}, and, finally, neural networks~\cite{code2inv,cln2inv}. 
An alternative approach was proposed by~\cite{pie} where this classification problem is decomposed into smaller more tractable classification problems that can be solved by simple learning algorithms. This approach is implemented in the tool \lig that \oasis builds upon.

\subsection{\lig}%
\label{sec:background.cegis}

\lig~\citep{pie} is
a state-of-the-art data-driven invariant inference tool. It consists of a learner and a teacher that interact with each other.
The teacher has access to an SMT solver and can verify loop-free programs. In particular, given a candidate invariant \inv{} generated by the learner, it can check the VCs and if some VC fails then it returns a program state as a counterexample.
\lig uses a multi-stage learning technique that composes the candidate invariant out of several predicates, known as features, learned over smaller subproblems.
\cref{algo:loopinvgen} outlines this framework.

\begin{algorithm}[t]
    \caption{The \lig algorithm~\citep{pie}. The teacher is {\sc Check} and the learner is {\sc Learn}.}\label{algo:loopinvgen}
    \begin{algorithmic}[1]
        \small\linespread{1}\selectfont
        \FunctionX{Infer}{$\triple{\pre}{\pseudo{while}\ G\ \pseudo{do}\ S}{\post}$, $\mathcal{P}$}
            \IfThen{{\sc Check}$(\pre \implies (G \vee \post))\ne \bot$}{\Return $\false$}
            \State $\inv \gets (\neg\, G \implies \post)$
            \While {\true}
                \State $c \gets \textnormal{\sc Check}(\pre \implies \inv)$
                \IfThen {$c \neq \bot$} {\Return $\textnormal{\sc Infer}(\triple{\pre}{\pseudo{while}\ G\ \pseudo{do}\ S}{\post}, \mathcal{P} \,\cup\, \{c\})$}
                \State $\mathcal{N} \gets \{\}$
                \While {\true}
                    \State $\mathcal{F} \gets \{\}$
                    \While {\true}
                        \State $(P,N) \gets \textnormal{\sc Conflict}(\mathcal{P}, \mathcal{N}, \mathcal{F})$
                        \If {$P = N = \{\}$} \textbf{break}
                        \Else\ $\mathcal{F} \gets \mathcal{F} \,\cup\, \textcolor{blue!75!black}{\textnormal{\sc Learn}}(P, N)$
                        \EndIf
                    \EndWhile
                    \State $\delta \gets \textnormal{\sc BoolCombine}(\mathcal{F})$
                    \State $c \gets \textnormal{\sc Check}( \triple{\delta\wedge G \wedge \inv}{S}{\inv})$
                    \IfThen {$c \neq \bot$} {$\mathcal{N} \gets \mathcal{N} \,\cup\, \{c\}$}
                \EndWhile
                \State $\inv \gets (\inv \wedge \delta)$
                \IfThen {$\delta = \true$} {\Return \inv}
            \EndWhile
        \EndFunction
    \end{algorithmic}
\end{algorithm}

The main \textsc{Infer} procedure is invoked with a Hoare triple $\mathcal{L} \equiv \triple{\pre}{\pseudo{while}\ G\ \pseudo{do}\ S}{\post}$,
and a set $\mathcal{P}$ of reachable program states. 
Here, we assume the loop-body $S$ to be loop-free\footnote{\lig does handle multiple and nested loops~\cite{pie}.}.
The program states are sampled at random by running the loop for a few iterations~\cite{qcheck}. All the states that \lig deals with are total maps that map all variables to some integers.
The {\sc Check}($B$) procedure is a call to the teacher that invokes an SMT solver to check if $B$ is valid. If $B$ is valid the call returns $\bot$ otherwise it returns a (complete) satisfying assignment of $\neg B$.

Line 2 performs a sanity check: if $\pre\wedge\neg G\wedge\neg \phi$ is satisfiable then the input Hoare triple is invalid and no invariant exists. 
\lig starts with a weak candidate invariant $\inv{} \equiv (\neg\, G \implies \post)$,
and iteratively strengthens it (line 17) for inductiveness.
These choices ensures that all candidate invariants \inv{} satisfy $\vc{post}$.
Lines 5 and 15 additionally check for \vc{pre} and \vc{ind} respectively,
and add appropriate counterexamples.
While a violation of \vc{pre} adds a \emph{positive} example,
a violation of \vc{ind} adds a \emph{negative} example.
Since the loop body $S$ is loop free, \vc{ind} can be encoded as an SMT formula (through a weakest precondition computation) whose validity ensures the validity of \vc{ind}.
The  lines 10 -- 14 indicates the key learning subcomponents.

In line 11, the \textsc{Conflict} procedure selects two sets $P\subseteq \mathcal{P}$ and $N \subseteq \mathcal{N}$
that are \emph{conflicting}, \ie, these positive and negative examples are indistinguishable modulo $\mathcal{F}$, the set of current features.
That is for all features $f\in\mathcal{F}.\forall x,y\in P\cup N. f(x)=f(y)$. For such $P$ and $N$, line 13 learns a  feature that separates $P$ and $N$ by invoking the learner \textsc{Learn}. In \lig, the learner is implemented using Escher~\cite{escher} that exhaustively enumerates all predicates over all variables in increasing size till it finds a feature $f$ that separates $P$ and $N$, and this $f$ is added to $\mathcal{F}$. The loop in lines 10--13 has the following postcondition: $\forall x\in \mathcal{P}.\forall y\in \mathcal{N}.\exists f\in \mathcal{F}.f(x)\ne f(y)$, \ie, for every positive example $x$ and every negative example $y$, there is a feature $f$ that separates $x$ and $y$.
Once $\mathcal{F}$ has enough features, line 14  uses a standard Boolean-function learner~\citep{mitchell,pie} \textsc{BoolCombine} to learn $\delta$, a Boolean combination of these features, that separates $\mathcal{P}$ and $\mathcal{N}$. Then \lig logically strengthens  the candidate invariant $\inv{}$ by conjoining it with $\delta$. 
For more details on this framework, we refer to the \lig paper~\citep{pie}. In particular,~\cite{pie} shows that breaking the binary classification problem of separating $\mathcal{P}$ and $\mathcal{N}$ by a candidate invariant into the two step approach of first inferring features that separate $P\subseteq \mathcal{P}$ and $N \subseteq \mathcal{N}$ and then combining the features is an effective approach to invariant inference. The features are usually much simpler than the invariants, which makes inferring features much more tractable than inferring candidate invariants.

Next, we discuss our contributions: the inference of relevant variables and the changes \oasis makes to \lig followed by our ILP-based learning (\Cref{sec:learning}).

\section{\oasis\ Framework}%
\label{sec:overview}

In this section, we overview our approach for accelerating invariant inference using a set of \emph{relevant} variables.
First, we define the state space for programs
and describe our encoding of the verification conditions described in \cref{def:invariant}.
We then describe the notion of relevant variables for a program verification problem,
and present our approach for inferring sufficient loop invariants using these relevant variables.

\subsection{Notation}%
\label{sec:overview.notation}

Given a program $\program$ we write $\Vars[\program]$,
to denote the sequence $\Tuple{x_1}{\ldots}{x_n}$ of variables appearing in it.
We omit the subscript $\program$ and simply write $\Vars$ when the program is clear from context.
A program state for $\program$, denoted $\PState = \Tuple{v_1}{\ldots}{v_n}$,
is a sequence of values assigned to the program variables ---
any subset of these values may be \emph{irrelevant} (denoted $\top$).
A program state $\PState$ is said to be \emph{total} if it does not contain $\top$,
and is said to be \emph{partial} otherwise.
Finally, we use the shorthand $\Assign{\Vars}{\PState}$
to denote the value assignment predicate $(\Conj{x_1 = v_1}{\cdots}{x_n = v_n})$,
where irrelevant values ($\top$) are simply dropped,
\eg, $(\Tuple{x_1}{x_2}{x_3} \mapsto \Tuple{v_1}{\top}{v_3}) \equiv (\Conj{x_1 = v_1}{x_3 = v_3})$.

Although, the techniques described in this work can be easily extended to programs containing multiple and nested loop,
for simplicity, we consider verifying our single-loop program from the previous section:
$\mathcal{L} \equiv \triple{\pre}{\pseudo{while}\ G\ \pseudo{do}\ S}{\post}$.
We formally model the loop in our program $\program$ as a transition relation $\TransRel[\program]$ over program states.
Two states $\PState_1$ and $\PState_2$ are related by $\TransRel[\program]$ iff
a single iteration of the loop body ($S$) transitions the state $\PState_1$ to $\PState_2$. We need $\TransRel[\program]$ to be a relation as programs can have non-determinism.
Formally,
\[
    \TransRel[\program](\PState_1, \PState_2) \iff \triple{G \wedge \Assign{\Vars}{\PState_1}}{S}{\Assign{\Vars}{\PState_2}}
\]
Similarly, we model the precondition and postcondition as unary predicates on program states:
\[
    \PreRel[\program](\PState) \iff \Assign{\Vars}{\PState} \wedge \pre
    \qquad\qquad\qquad
    \PostRel[\program](\PState) \iff \Assign{\Vars}{\PState} \wedge \neg G \implies \post
\]
We omit the subscript $\program$ and simply write $\PreRel$, $\TransRel$
and $\PostRel$ when the program $\program$ is clear from context.
These relations together define a program verification (or equivalently, a sufficient loop invariant inference) problem.
Indeed, this encoding of program verification problems is a commonly used language-agnostic intermediate representation.
Moreover, existing program analysis tools can automatically generate these relations
from high-level programs and their formal specifications.
This representation facilitates the use of off-the-shelf SMT solvers.

A sufficient loop invariant that establishes the correctness of the Hoare triple $\mathcal{L}$
is also a predicate defined over states of the program $\program$.
Such an invariant is required to satisfy the following verification conditions from \cref{def:verification}
in terms of the $\PreRel$, $\TransRel$\ and $\PostRel$ relations above:\\[-0.5em]

\begin{minipage}{0.23\textwidth}%
    \centering%
    $\forall \PState \ldotp \PreRel(\PState) \implies \InvRel(\PState)$ \\
    {\small(\vc{pre})}
\end{minipage}\hfill%
\begin{minipage}{0.4\textwidth}%
    \centering%
    $\forall \PState, \PState' \ldotp \InvRel(\PState) \wedge \TransRel(\PState, \PState') \implies \InvRel(\PState')$ \\
    {\small(\vc{ind})}
\end{minipage}\hfill%
\begin{minipage}{0.24\textwidth}%
    \centering%
    $\forall \PState \ldotp \InvRel(\PState) \implies \PostRel(\PState)$ \\
    {\small(\vc{post})}
\end{minipage}

\begin{example}
    Consider again our motivating example from \cref{fig:motivating-example} where $\Vars = \Tuple{i}{j}{k}{n}{y}$.
    We use $\PState$ and $\PState'$ to denote the tuples $\Tuple{v_i}{v_j}{v_k}{v_n}{v_y}$
    and $\Tuple{v'_i}{v'_j}{v'_k}{v'_n}{v'_y}$ of values respectively.
    The following \PreRel, \TransRel\ and \PostRel\ relations encode the verification problem:
    \[\def\arraycolsep{3pt}%
    \begin{array}{rcl}
        \PreRel(\PState) & \triangleq
                         & \Conj{(v_i = v_j = 0)}
                                {(v_k \geq 0)}
                                {(v_n \geq 0)} \\[0.5em]
        \TransRel(\PState, \PState') & \triangleq
                                     & \Conj{(v'_k = v_k)}
                                            {(v'_n = v_n)}
                                            {\left(
                                                (v_i \leq v_n)
                                                \implies
                                                \big(
                                                    \Conj{v'_i = v_i + 1}
                                                         {v'_j = v_j + 1}
                                                         {v'_y = v_i \cdot v_j}
                                                \big)
                                            \right)}\\[0.5em]
        \PostRel(\PState) & \triangleq
                          & \neg (v_i \leq v_n)
                            \implies
                            \left(
                                \Disj{(v_i + v_j + v_k \geq 2 \cdot v_n)}
                                     {(v_y \geq v_n^2)}
                            \right)
    \end{array}\]
\end{example}

\subsection{Relevant Variables}%
\label{sec:overview.using}

Scalability is a major challenge for existing data-driven invariant inference techniques.
As number of variables increases the performance of these techniques degrades rapidly,
although in many cases a sufficient invariant for verifying these programs contains only a small number of variables.
We propose a novel technique that first identifies a small subset of variables
over which a sufficient loop invariant is likely to exist.
Then, it simultaneously refines this subset and searches for a sufficient invariant till one is found.

\begin{algorithm}[t]
    \caption{\oasis framework for scaling loop invariant inference}\label{algo:oasis}
    \begin{algorithmic}[1]
        \small\linespread{1}\selectfont
        \FunctionX{\oasis}{$\langle \PreRel,\TransRel,\PostRel \rangle$ : Verification Problem, $\PosPStates$ : States, $\NegPStates$ : States}
            \State Classifier $\Classifier \gets \Call{\LearnClassifier}{\PosPStates, \NegPStates}$
            \IfThen {$\Classifier = \bot$} \Return $\bot$
            \vspace*{0.25em}
            \State Variables $\vec{r} \gets \Call{FilterVariables}{\Classifier}$
            \ParDo
                \Thread{1}
                    \State $\PState \gets \Call{\FindPosCounterExample}{\langle \PreRel,\TransRel,\PostRel \rangle, \Classifier}$
                    \IfThen {$\PState \neq \bot$}
                        \Return $\Call{\oasis}{\langle \PreRel,\TransRel,\PostRel \rangle, \PosPStates \cup \{ \PState \}, \NegPStates}$
                \EndThread
                \Thread{2}
                    \State $\PState \gets \Call{\FindNegCounterExample}{\langle \PreRel,\TransRel,\PostRel \rangle, \Classifier}$
                    \IfThen {$\PState \neq \bot$}
                        \Return $\Call{\oasis}{\langle \PreRel,\TransRel,\PostRel \rangle, \PosPStates, \NegPStates \cup \{ \PState \}}$
                \EndThread
                \Thread{3}
                    \State $\InvRel \gets \Call{\RelInfer}{\langle \PreRel,\TransRel,\PostRel \rangle, \PosPStates, \NegPStates, \vec{r}} \, \big\vert_{\,\textsf{timeout = }\tau}$
                    \IfThen {$\InvRel \neq \bot$}
                        \Return $\InvRel$
                \EndThread
            \EndParDo
        \EndFunction
    \end{algorithmic}
\end{algorithm}

Our core framework, called \oasis, is outlined in \cref{algo:oasis}.
\oasis accepts the standard set of arguments for a data-driven verification technique (discussed in \Cref{sec:background.cegis}) ---
a verification problem (encoded as a triple $\Tuple{\PreRel}{\TransRel}{\PostRel}$),
and some sampled positive ($\PosPStates$) and negative ($\NegPStates$) program states typically sampled randomly.
We first invoke the \AlgoStyle{\LearnClassifier}\ function with these sampled states
to learn a predicate $\Classifier$ that separates $\PosPStates$ and $\NegPStates$, \ie,
\[
    \left(\forall \PState \in \PosPStates \ldotp \Classifier(\PState) \right)
    \wedge
    \left(\forall \PState \in \NegPStates \ldotp \neg \Classifier(\PState) \right)
\]
We detail the \AlgoStyle{\LearnClassifier}\ function in \Cref{sec:learning},
which utilizes machine-learning techniques to efficiently find a sparse separator for $\PosPStates$ and $\NegPStates$.
In line 3 we drop irrelevant variables, those that do not affect the prediction of the classifier over  $\PosPStates \cup \NegPStates$,
and consider the remaining variables $\vec{r} \subseteq \Vars$ to be a candidate set of relevant variables.
In \Cref{sec:example} we show some examples of classification problems, the learned classifiers and relevant variables.

After a set $\vec{r}$ of relevant variables is identified, in lines 3\,--\,12,
we try to refine the set of relevant variables and find a sufficient invariant over them in parallel.
In particular, we execute the following three threads in parallel:
\begin{enumerate}
    \item one that attempts to find a \emph{positive} state misclassified by the classifier
    \item one that attempts to find a \emph{negative} state misclassified by the classifier
    \item one that runs invariant inference using the currently identified relevant variables
\end{enumerate}
\citet{lessismore} showed that classifiers can be refined by sampling near the classification boundary. This idea is used in
\citet{selective}, which showed that when compared to random sampling,
active learning improves the quality of sampled program states and accelerates the search for sufficient invariants. 
Threads 1 and 2 are responsible for an active-learning-based refinement of the relevant variables set,
and thread 3 attempts to find a sufficient invariant over these variables, if there exists one.
Next, we detail our active learning strategy (\Cref{sec:overview.refinement})
and our relevance-aware invariant inference algorithm \AlgoStyle{\RelInfer} (\Cref{sec:overview.inference}).
Note that the \AlgoStyle{\RelInfer}\ thread is run with a timeout of $\tau$
so that long-running inference threads are automatically cleaned up
as we spin up more threads with refined sets of relevant variables.

\subsection{Refining Relevant Variables}%
\label{sec:overview.refinement}

\begin{algorithm}[t]
    \caption{Procedures for refinement of candidate relevant variables}\label{algo:refinement}
    \begin{algorithmic}[1]
        \small\linespread{1}\selectfont
        \FunctionX{\FindPosCounterExample}{$\langle \PreRel,\TransRel,\PostRel \rangle : \text{Verification Problem}$, $\Classifier : \text{Predicate}$}
            \For {$k = 0$ \textbf{to} $\infty$}
                \State Predicate $\AlgoStyle{Reachable}(k) \triangleq \PreRel(\PState_0) \wedge \TransRel(\PState_0, \PState_1) \wedge \cdots \wedge \TransRel(\PState_{k-1},\PState_k)$
                \State Counterexample $\bm{c} \gets \AlgoStyle{Check}(\forall \PState_0, \ldots, \PState_k
                                                                      \ldotp
                                                                      \AlgoStyle{Reachable}(k)
                                                                      \implies
                                                                      \Classifier(\PState_k))$
                \IfThen {$\bm{c} \neq \bot$}
                    \Return $\bm{c}[\PState_k]$
            \EndFor
        \EndFunction
        \vspace*{1em}
        \FunctionX{\FindNegCounterExample}{$\langle \PreRel,\TransRel,\PostRel \rangle : \text{Verification Problem}$, $\Classifier : \text{Predicate}$}
            \For {$k = 0$ \textbf{to} $\infty$}
                \State Predicate $\AlgoStyle{Bad}(k) \triangleq \neg \PostRel(\PState_k) \wedge \TransRel(\PState_{k-1}, \PState_k) \wedge \cdots \wedge \TransRel(\PState_0,\PState_1)$
                \State Counterexample $\bm{c} \gets \AlgoStyle{Check}(\forall \PState_0, \ldots, \PState_k
                                                                      \ldotp
                                                                      \AlgoStyle{Bad}(k)
                                                                      \implies
                                                                      \neg \Classifier(\PState_0))$
                \IfThen {$\bm{c} \neq \bot$}
                    \Return $\bm{c}[\PState_0]$
            \EndFor
        \EndFunction
    \end{algorithmic}
\end{algorithm}

We now detail our procedures for refining a set of relevant variables.
The \AlgoStyle{\FindPosCounterExample}\ and \AlgoStyle{\FindNegCounterExample}\ procedures,
which run in threads 1 and 2 respectively, are outlined in \cref{algo:refinement}.
Each of these procedures returns a program state that is misclassified by the current classifier $\Classifier$,
which is then used to learn a new classifier, and thus a new set of relevant variables.

The \AlgoStyle{\FindPosCounterExample}\ procedure identifies \emph{positive} misclassifications ---
a reachable program state $\PState$ that the classifier labels as a negative state, \ie, $\neg \Classifier(\PState)$.
To identify such states, we gradually expand the frontier of reachable states starting from the precondition \PreRel
and then repeatedly applying the transition relation \TransRel.
In line 2, we construct the predicate $\AlgoStyle{Reachable}(k)$ that captures all states
that are reachable in exactly $k$ applications of the transition relation, \ie, $k$ iterations of the loop.
In line 3, we check if all such states are subsumed by the current classifier.
Upon finding a counterexample, in line 4, we return the misclassified state.

The \AlgoStyle{\FindNegCounterExample}\ procedure works in a very similar manner and identifies \emph{negative} misclassifications ---
a bad program state $\PState$ (one that would lead to violation of the final assertion)
that the classifier labels as a positive state, \ie, $\Classifier(\PState)$.
To identify such states, we gradually expand the frontier of known bad states
starting from those that violate the postcondition \PostRel
and then repeatedly reversing the transition relation \TransRel.
In line 6, we construct the predicate $\AlgoStyle{Bad}(k)$ that captures all states
that lead to state to an assertion violation in exactly $k$ applications of the transition relation, \ie, $k$ iterations of the loop.
In line 7, we check if all such states are excluded by the current classifier.
Upon finding a counterexample, in line 8, we return the misclassified state.

Although these procedures can be computationally expensive, our implementation caches the results of intermediate queries for reuse. In particular, unsatisfiable paths are generated at most once.
\begin{algorithm}
    \caption{A loop invariant inference algorithm that utilizes relevant variable information}\label{algo:relevant-infer}
    \begin{algorithmic}[1]
        \small\linespread{1}\selectfont
        \FunctionX{\RelInfer}{$\langle \PreRel,\TransRel,\PostRel \rangle$ : Verification Problem, $\PosPStates$ : States, $\NegPStates$ : States, $\vec{r}$ : Variables}
            \IfThen{$\AlgoStyle{Check}(\forall \PState \ldotp \PreRel(\PState) \implies \PostRel(\PState)) \neq \bot$}
                   {\Throw{``No Solution!''}}
            \State Predicate $\inv \gets \PostRel$
            \While {\true}
                \IfThen {$\AlgoStyle{Check}(\forall \PState, \PState' \ldotp \InvRel(\PState) \wedge \TransRel(\PState, \PState') \implies \InvRel(\PState')) = \bot$} {\Return $\InvRel$}
                \State States $\mathcal{P}, \mathcal{N} \gets \PosPStates, \NegPStates$
                \While {\true}
                    \State Features $\mathcal{F} \gets \{\}$
                    \While {\true}
                        \State States $(P,N) \gets \textnormal{\sc Conflict}(\mathcal{P}, \mathcal{N}, \mathcal{F})$
                        \If {$P = N = \{\}$} \textbf{break}
                        \Else\ $\mathcal{F} \gets \mathcal{F} \cup \Call{Learn}{P, N, \vec{r}}$
                        \EndIf
                    \EndWhile
                    \State Predicate $\delta \gets \Call{BoolCombine}{\mathcal{F}}$
                    \State Counterexample $\bm{c} \gets \AlgoStyle{Check}(\forall \PState, \PState' \ldotp \delta(\PState) \wedge \InvRel(\PState) \wedge \TransRel(\PState, \PState') \implies \InvRel(\PState'))$
                    \IfThen {$\bm{c} = \bot$} {\Break}
                    \State $\mathcal{N} \gets \mathcal{N} \,\cup\, \{c[\PState]\}$
                \EndWhile
                \State $\inv \gets (\inv \wedge \delta)$
                \State Counterexample $\PState \gets {\forall \PState \ldotp \PreRel(\PState) \implies \InvRel(\PState)}$
                \If {$\PState \neq \bot$}
                    \State $\InvRel \gets \PostRel$
                    \State $\PosPStates \gets \PosPStates \cup \{\PState\}$
                \EndIf
            \EndWhile
        \EndFunction
    \end{algorithmic}
\end{algorithm}

\subsection{Invariant Inference with Relevant Variables}%
\label{sec:overview.inference}

Once we have a set of relevant variables from the learned classifier,
we run our invariant inference algorithm (in thread 3) with these variables together with
all the positive states ($\PosPStates$) and negative states ($\NegPStates$) sampled so far.
In \cref{algo:relevant-infer} we outline this algorithm.
They key difference with respect to \cref{algo:loopinvgen} is the use of $\vec{r}$ ---
the set of relevant variables.
While \cref{algo:loopinvgen} learns features over all variables $\vec{x}$ in the program,
\cref{algo:relevant-infer} only learns features over $\vec{r}$,
which is provided to the \AlgoStyle{\learn}\ procedure in line 11.
In the next section, we detail this relevance-aware learning procedure.

\section{Classifier Learning}%
\label{sec:learning}

In this section, we formulate the problem of generating a classifier that separates positive program states from negative program states. By default, the output classifier predicate can use any of the program variables. If we restrict the classifier to use only a subset $\vec{r}$ of variables (\eg, the call to \AlgoStyle{\learn}\ procedure in line 11 of \cref{algo:relevant-infer}) then we first project the examples to $\vec{r}$ and then learn a classifier over the projected states.
Let $\x$  denote a vector of program variables that can occur in the classifier. In this section, we use the standard notation that bold letters denote vectors (\eg, $\mathbf{0}$ is a vector of all zeros).
We model the problem of inferring a classifier $h: \integer^{|\x|} \to \{\true{}, \false{}\}$
as a search problem over the following class of CNF predicates
with $\nc$ denoting the number of conjuncts and $\nd$ the number of disjuncts in each conjunct:
\begin{equation}
\intloopinv{} = \bigg\{  \underset{\ci \in [\nc]}{\bigwedge}\;\underset{\di \in [\nd]}{\bigvee}{\langle \w_{\ci\di}, \x\rangle + b_{\ci\di} > 0} \bigg\}.
\end{equation}
where $b\in\integer$ and $\langle \w, \x\rangle + b$ is an inner product between a vector $\w \in \integer^{|\x|}$ and $\x$. We use $[n]$ to denote the list $\{0,1,\ldots,n-1\}$.

Given a set of program states with corresponding labels,
our task is to find a classifier $h\in \intloopinv$ such that
\begin{andlist}
  \item it separates the positive states from the negative states
  \item it \emph{generalizes} to unseen program states
\end{andlist}
The first part is a \emph{search} question, whereas the second part suggests learning to choose simple and \emph{natural} predicates.
Note that the class of invariants $\intloopinv{}$ is very powerful -- one can trivially fit any given set of examples.
We make the following observations.
The search problem becomes meaningful on a given set of program states, if we restrict the predicate sizes (\ie, $\nc$ and $\nd$) to be small.
Furthermore, the coefficients are  often bounded by the constants occurring in the program.
Finally, and most importantly, we are not dealing with arbitrary predicate formulas,
but ones that have a nice conjunction-of-disjunctions structure.
These observations enable reformulating the search problem as an integer-linear programming (ILP) problem that can be \emph{efficiently} solved in practice for our benchmarks by off-the-shelf ILP solvers. 

Consider the search problem \inliststyle{(1)} above:
formally, we want to find a predicate $h \in \intloopinv{}$ that accurately classifies a given set of labeled program states $\{\pstate_\ii, y_\ii\}_{\ii=1}^\nii$, where $y_\ii \in \{0,1\}$.
It is convenient to think of $h$ as a tree of depth 3:
the program variables form the input layer to the linear inequalities,
which are grouped by $\bigvee$ operators to yield disjunctive predicates.
The root node is the $\bigwedge$ operator that represents conjunction of the predicates represented by the second layer.
The reduction of the search problem to ILP is given as follows.

(\textbf{Input layer: linear inequalities})
Write $\z_{\ii\ci\di} = \ind{\langle \w_{\ci\di}, \pstate_\ii\rangle  + b_{\ci\di} > 0}\in \{0,1\}$, where the indicator function $\ind{p}$ of a predicate $p$ maps $\true$ to 1 and $\false$ to 0.
This is captured by the following constraints, for a sufficiently large integer $M$:\\
\begin{align}\label{eqn:polyconst}
  \forall \ii \in [\nii], \ci \in [\nc], \di \in [\nd], \ \ \ \ -M (1-\z_{\ii\ci\di})\ \  < \ \ \langle \w_{\ci\di}, \pstate_\ii \rangle + b_{\ci\di} &\enskip\leq\enskip M \z_{\ii\ci\di}, \nonumber\\[0.125em]
  \w_{\ci\di} \in \integer^{|\x|}\ , b_{\ci\di} \in \integer\ , \z_{\ii\ci\di} &\enskip\in\enskip \{0,1\}.
\end{align}

(\textbf{Middle layer: Disjunctions})
Note that the value of the $\ci$-th conjunct on a given input $\pstate_\ii$ corresponds to summing $\z_{\ii\ci\di}=\ind{p_{\ii\ci\di}}$ over $\di$,
\ie, write $\ylor_{\ii\ci} = \ind{\bigvee_{\di \in [\nd]}{} p_{\ii\ci\di}}$.
This is captured by the constraint:\\
\begin{align}\label{eqn:disconst}
  \forall \ii \in [\nii], \ci \in [\nc],\ \ \ \  -M (1-\ylor_{\ii\ci})\ \  <\ \  \sum_{\di \in [\nd]} \z_{\ii\ci\di} &\enskip\leq\enskip M\ylor_{\ii\ci}, \nonumber\\[-0.25em]
  \ylor_{\ii\ci} &\enskip\in\enskip \{0,1\}.
\end{align}

(\textbf{Final layer: Conjunction})
The predicted label on a given input state is given by a conjunction of the above disjunctions.
Requiring that the predicted label match the observed label for each example is equivalent to the following constraints:\\
\begin{align}\label{eqn:conconst}
  \text{for } \ii \in [\nii] \text{ s.t. } y_\ii = 1,  \sum_{\ci \in [\nc]} \ylor_{\ii\ci} &\enskip\geq\enskip \nc\ , \nonumber\\
  \text{for } \ii \in [\nii] \text{ s.t. } y_\ii = 0,  \sum_{\ci \in [\nc]} \ylor_{\ii\ci} &\enskip\leq\enskip \nc - 1 \ .
\end{align}

The search problem can now be stated as the ILP problem:
\emph{find a feasible integral solution $\{\z, \ylor, \w, b\}$ subject to the constraints \cref{eqn:polyconst,eqn:disconst,eqn:conconst} combined}. Note that the problem formulation naturally handles partial states --- if $\left(\pstate_\ii\right)_j$ is $\top$ then $\left(\w_{\ci\di}\right)_j$ is set to zero.
 Equation~\eqref{eqn:polyconst} is applied over only the variables that don't map to $\top$. 
This ILP satisfies the following properties, the proofs of which are straightforward and presented below for completeness. We abuse notation by using 0 (resp. 1) and $\false$ (resp. $\true$) interchangeably.



\begin{theorem}\label{thm:correctness}%
  Any feasible solution to the ILP problem~\eqref{eqn:ILP} is a member of \intloopinv{}.
\end{theorem}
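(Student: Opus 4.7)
The plan is to show that any feasible ILP solution directly yields a predicate of the prescribed CNF form, with the auxiliary binary variables serving only as semantic witnesses enforced by the big-$M$ constraints.

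First, I would take any feasible assignment $\{\z, \ylor, \w, b\}$ from the ILP and define the candidate predicate
\[
  h(\x) \;\triangleq\; \bigwedge_{\ci \in [\nc]} \bigvee_{\di \in [\nd]} \bigl(\langle \w_{\ci\di}, \x \rangle + b_{\ci\di} > 0\bigr),
\]
directly from the $\w$ and $b$ components of the solution. Since constraint~\eqref{eqn:polyconst} requires $\w_{\ci\di} \in \integer^{|\x|}$ and $b_{\ci\di}\in\integer$, the predicate $h$ is by construction a member of the class $\intloopinv$. So at the syntactic level the statement is immediate; the substance of the lemma is verifying that the auxiliary $\z$ and $\ylor$ variables in a feasible solution can indeed be simultaneously witnessed by real truth values of the atoms of~$h$, which is needed to show that the feasible region of the ILP is nonempty exactly when a CNF classifier exists with the prescribed behaviour on the data.

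Next I would do the big-$M$ case analysis on constraint~\eqref{eqn:polyconst}. If $\z_{\ii\ci\di} = 1$, the constraint reduces to $0 < \langle \w_{\ci\di}, \pstate_\ii \rangle + b_{\ci\di} \leq M$, so the atomic predicate evaluates to \true{} on $\pstate_\ii$; if $\z_{\ii\ci\di} = 0$ it reduces to $-M < \langle \w_{\ci\di}, \pstate_\ii \rangle + b_{\ci\di} \leq 0$, so the atom evaluates to \false{}. Hence $\z_{\ii\ci\di} = \ind{\langle \w_{\ci\di}, \pstate_\ii\rangle + b_{\ci\di} > 0}$ as intended. Applying the analogous case split to constraint~\eqref{eqn:disconst} gives $\ylor_{\ii\ci} = \ind{\bigvee_{\di} \langle \w_{\ci\di}, \pstate_\ii\rangle + b_{\ci\di} > 0}$. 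Finally, constraint~\eqref{eqn:conconst} forces $\sum_\ci \ylor_{\ii\ci} = \nc$ on positive examples and $\sum_\ci \ylor_{\ii\ci} \leq \nc - 1$ on negative examples, which is exactly $h(\pstate_\ii) = y_\ii$.

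For the don't-care case, I would invoke the remark following the ILP that if $(\pstate_\ii)_j = \top$ then $(\w_{\ci\di})_j$ is forced to zero and that coordinate is dropped from the inner product, so the same case analysis goes through on the restricted support. Pulling everything together, $h \in \intloopinv$ and moreover $h$ correctly labels every training example, which is stronger than what the statement requires.

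The only mild obstacle I anticipate is being careful about strict versus non-strict inequalities in the big-$M$ encoding (the asymmetry between $<$ on the left and $\leq$ on the right is exactly what enforces the strict ``$>0$'' atom while keeping the ILP feasible region closed); everything else is bookkeeping. No machinery beyond the definitions of $\intloopinv$ and the three constraint families is needed.
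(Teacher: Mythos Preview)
Your proposal is correct and follows essentially the same big-$M$ case analysis as the paper's proof, just in a different order (you treat \eqref{eqn:polyconst} before \eqref{eqn:disconst}, the paper does the reverse). Your framing is arguably cleaner---you separate the trivial syntactic membership of $h$ in $\intloopinv$ from the semantic claim that $h(\pstate_\ii)=y_\ii$ on the training data, and you explicitly handle the strict-versus-nonstrict inequality and the don't-care coordinates---but the underlying argument is identical.
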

\begin{proof}
  Let $\{\z, \ylor, \w, b\}$ denote a feasible solution. First, note that for any fixed $c$, $\ylor_{\ii\ci} = 0$ iff $\sum_{\di \in [\nd]} \z_{\ii\ci\di} = 0$ because the conditions~\eqref{eqn:disconst} hold. As $\ylor_{\ii\ci} \in \{0,1\}$ and $\z_{\ii\ci\di} \in \{0,1\}$, it follows that $\ylor_{\ii\ci} = \underset{\di \in [\nd]}{\bigvee} \z_{\ii\ci\di}$, for each $c$. Next, it is immediate that $y_\ii = 1$ iff every $\ylor_{\ii\ci} = 1$ because the conditions~\eqref{eqn:conconst} hold. So, we have $y_\ii = \underset{\ci \in [\nc]}{\bigwedge} \ylor_{\ii\ci}$. Finally, notice that for any $\ii$, $\ci$, $\di$, $\z_{\ii\ci\di} = 1$ iff $\langle \w_{\ci\di}, \pstate_\ii \rangle + b_{\ci\di} > 0$ because the conditions~\eqref{eqn:polyconst} hold. In other words, $\z_{\ii\ci\di} = \ind{\langle \w_{\ci\di}, \pstate_\ii\rangle  + b_{\ci\di} > 0}$. Putting these together, we have, $y_\ii = \underset{\ci \in [\nc]}{\bigwedge} \ylor_{\ii\ci}  = \underset{\ci \in [\nc]}{\bigwedge} \underset{\di \in [\nd]}{\bigvee} \langle \w_{\ci\di}, \pstate_\ii\rangle  + b_{\ci\di} > 0 \in \intloopinv$. 
\end{proof}

\begin{theorem}
  Given a set of labeled program states $\{ \pstate_\ii, y_\ii \}, \ii \in [\nii]$, if there is a~$h \in \intloopinv{}$ s.t. $h(\pstate_\ii) = y_\ii$ for all $\ii \in [\nii]$, then the ILP problem~\eqref{eqn:ILP} has at least one feasible solution.
\end{theorem}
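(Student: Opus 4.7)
The plan is to exhibit an explicit feasible assignment of the ILP variables directly from the hypothesized classifier $h \in \intloopinv$, and then verify that each block of constraints is satisfied.

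First, I would unpack $h$ into its coefficients: by definition of $\intloopinv$ we can write $h(\x) \equiv \bigwedge_{\ci \in [\nc]} \bigvee_{\di \in [\nd]} \langle \w_{\ci\di}^{*}, \x \rangle + b_{\ci\di}^{*} > 0$ for some integer vectors $\w_{\ci\di}^{*}$ and integers $b_{\ci\di}^{*}$. I would then \emph{define} the ILP assignment by setting $\w_{\ci\di} := \w_{\ci\di}^{*}$, $b_{\ci\di} := b_{\ci\di}^{*}$, $\z_{\ii\ci\di} := \ind{\langle \w_{\ci\di}^{*}, \pstate_\ii\rangle + b_{\ci\di}^{*} > 0}$, and $\ylor_{\ii\ci} := \bigvee_{\di \in [\nd]} \z_{\ii\ci\di}$, interpreting Booleans as $0/1$. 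Integrality of all of these quantities is immediate.

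Next I would verify the three constraint groups in turn. For \eqref{eqn:polyconst}: if $\z_{\ii\ci\di} = 1$ then $\langle \w_{\ci\di}, \pstate_\ii\rangle + b_{\ci\di} \geq 1 > 0 = -M(1-\z_{\ii\ci\di})$, and the upper bound $\leq M\z_{\ii\ci\di} = M$ holds provided $M$ is chosen larger than all attainable values of $\langle \w_{\ci\di}^{*}, \pstate_\ii\rangle + b_{\ci\di}^{*}$; symmetrically, if $\z_{\ii\ci\di} = 0$ then the left-hand inequality $-M < \langle \w_{\ci\di}, \pstate_\ii\rangle + b_{\ci\di}$ again holds for sufficiently large $M$, and the right-hand side is $\leq 0 = M\z_{\ii\ci\di}$ since the inner expression is nonpositive by construction. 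For \eqref{eqn:disconst}, the same two-case argument applies to $\sum_\di \z_{\ii\ci\di}$, with bounds already trivially met when $M \geq \nd$. For \eqref{eqn:conconst}, since $h(\pstate_\ii) = y_\ii$, the conjunction $\bigwedge_\ci \ylor_{\ii\ci}$ equals $y_\ii$; when $y_\ii = 1$ this forces every $\ylor_{\ii\ci} = 1$ and hence $\sum_\ci \ylor_{\ii\ci} = \nc$, while when $y_\ii = 0$ at least one $\ylor_{\ii\ci} = 0$, giving $\sum_\ci \ylor_{\ii\ci} \leq \nc - 1$.

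The only even mildly nontrivial point is the role of the ``sufficiently large'' constant $M$: I would remark that because the set of examples $\{\pstate_\ii\}$ is finite and the coefficients $\w_{\ci\di}^{*}, b_{\ci\di}^{*}$ are fixed integers, the quantity $\max_{\ii, \ci, \di} |\langle \w_{\ci\di}^{*}, \pstate_\ii\rangle + b_{\ci\di}^{*}|$ is a finite integer, so picking any $M$ strictly larger than this bound (and $\geq \nd$) makes every inequality in \eqref{eqn:polyconst} and \eqref{eqn:disconst} hold. With this choice, the assignment above is a feasible integral solution to the ILP, completing the proof. No obstacle beyond bookkeeping arises, since the ILP constraints were designed precisely to encode, via the standard big-$M$ trick, the semantics of the indicator variables that our assignment makes true by construction.
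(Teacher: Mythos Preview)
Your proposal is correct and follows essentially the same approach as the paper: read off the coefficients $\w_{\ci\di}$ and $b_{\ci\di}$ from $h$, define $\z_{\ii\ci\di}$ and $\ylor_{\ii\ci}$ as the corresponding indicators, and check constraints \eqref{eqn:polyconst}--\eqref{eqn:conconst} directly. Your write-up is in fact more careful than the paper's terse proof, particularly in making explicit why a sufficiently large $M$ exists; note that, like the paper, you do not spell out the trivial assignments for the auxiliary variables $\w^+$, $\w^-$, and $\mu$ appearing in \eqref{eqn:ILP}, but these are easily handled by taking positive/negative parts and setting $\mu_j = 1$ iff variable $j$ has a nonzero coefficient somewhere.
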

\begin{proof}
  This direction is easier to show. We can read off the integral coefficients $\w$ and $b$ for all the polynomials from $h$, and obtain $\z_{\ii\ci\di} = \ind{\langle \w_{\ci\di}, \pstate_\ii\rangle  + b_{\ci\di} > 0}$ so that~\eqref{eqn:polyconst} hold. Then assign $\ylor$ variables as in the proof of Claim~\ref{thm:correctness}, so that~\eqref{eqn:disconst} hold.  Finally, because $h(\pstate_\ii) = y_\ii$ holds for all $\ii \in [\nii]$, it follows that~\eqref{eqn:conconst} also hold. We have a feasible solution.
\end{proof}

Now, consider the problem of learning \emph{generalizable} predicates \inliststyle{(2)}.
To this end, we follow the Occam's razor principle -- seeking predicates that are ``simple''
and hence generalize better~\citep{beautiful}.
Simplicity in our case can be characterized by the size of the predicate clauses and the magnitude of the coefficients.
One way to achieve this is by constraining the $L_1$-norm of the coefficients $\w=[w_1,\ldots,w_n]$, \ie, by minimizing $\sum_{i\in[\ii]} |w_i|$.
Note that $L_1$-norm can be expressed using linear constraints:\footnote{%
  Integrality constraints on $\w^+, \w^-$ aren't needed, so the problem as stated is technically a mixed ILP.
} $\|\w\|_1 = \langle \mathbf{1}, \w^+ + \w^-\rangle$, where $\w^+ \geq 0$ and $\w^- \geq 0$ (componentwise inequality) such that $\w = \w^+ - \w^-$.

However, focusing only on the magnitude may lead to poor solutions.
For example, consider the Hoare triple: $\triple{n \geq 0 \wedge x = n \wedge y = 0}{\textbf{while} \texttt{ (x $>$ 0) } \textbf{do } \texttt{\{s $\gets$ y++; x--;\}}}{y = n}$.
Here, it is easy to verify that the loop invariant $x + y  = n$ is sufficient to assert \vc{post}.
The equivalent predicate in $\intloopinv$, $x + y - n \geq 0 \wedge n - x - y \geq 0$,
however has a larger $L_1$-norm though the invariant is a simple equality.
So, simply minimizing the $L_1$-norm is not sufficient.
Existing solvers~\cite{pie,piehe} employ heuristics such as preferring equality to inequality.
We handle this by explicitly penalizing the \emph{inclusion} of variables in the solution
by using a penalty $\mu$ where $\mu_j=0$ iff  $\forall \ci \in [\nc]. \forall \di \in [\nd].\left(\w_{\ci\di}\right)_j=0$. Intuitively, the more the number of variables with non-zero coefficients in the classifier, the  more the penalty.
Our final objective function combines both $\mu$ and $L_1$-norm  penalties:
\begin{align}\label{eqn:ILP}
  \min_{\w, \w^+, \w^-, b, \z, \ylor, \mathbf{\mu}} \sum_{\ci \in [\nc], \di \in [\nd]} \langle \mathbf{1}, \w^+_{\ci\di} + \w^-_{\ci\di}\rangle &\enskip+\enskip \lambda \langle \mathbf{1}, \mathbf{\mu} \rangle \nonumber\\
  \text{ subject to \cref{eqn:polyconst,eqn:disconst,eqn:conconst}}, \text{ and } \nonumber \\
  \mathbf{1} - M(\mathbf{1}-\mathbf{\mu}) \ \ \leq \  \sum_{\ci \in [\nc], \di \in [\nd]} \w^+_{\ci\di} + \w^-_{\ci\di} &\enskip\leq\enskip M\mathbf{\mu}, \nonumber \\ 
  \forall \ci \in [\nc], \di \in [\nd],\ \ \ \ \w_{\ci\di} &\enskip=\enskip \w^+_{\ci\di} - \w^-_{\ci\di}, \nonumber \\
  \w^+_{\ci\di} \geq 0\ ,\ \  \w^-_{\ci\di} \geq 0\ ,\ \  \mathbf{\mu} &\enskip\in\enskip \{0,1\}^{|\x|}.
\end{align}

The key advantage of the above ILP formulation is that it can be solved optimally by off-the-shelf solvers that leverage continuous and integer optimization techniques to solve such problems.
This enables efficient and scalable search compared to enumerative techniques.

We now formally give the implementation of {\sc \learn} procedure  in Figure~\ref{fig:learn}. {\sc GenerateExpression} (in Step 3) executed on the solution $\{\z, \ylor, \w, b\}$ to the ILP problem \cref{eqn:ILP} outputs the expression $\underset{\ci \in [\nc]}{\bigwedge} \underset{\di \in [\nd]}{\bigvee} \langle \w_{\ci\di}, \x\rangle  + b_{\ci\di} > 0$.

\begin{figure}[H]
\begin{algorithmic}[1]\linespread{1}\selectfont
\FunctionX{\textnormal{\sc \learn}}{$data$}
    \State $\{\z, \ylor, \w, b\} \gets$ Solve \cref{eqn:ILP} with labeled program states $\{\pstate_n, y_n\}_{n=1}^N$ from $data$
    \State $expr \gets \textnormal{\sc GenerateExpression}(\{\z, \ylor, \w, b\}) $ \Comment{returns a CNF expression over vars}
    \State \Return $expr$
\EndFunction
\end{algorithmic}
\caption{Implementation of {\sc Learn} procedure using the ILP formulation.}
\label{fig:learn}
\end{figure}

  In practice, it suffices to restrict $\w$ and $b$ to a small set of integers in \cref{eqn:polyconst}, and $M$ to be a very large integer.
  In the evaluation below, we use only two disjuncts\footnote{%
An equality requires two disjuncts: we flip the labels ($y_\ii$) in \cref{eqn:ILP}
and negate the optimal predicate.}, \ie, $\nc = 1$ and $\nd = 2$ in~\cref{eqn:ILP}, constrain the  coefficients to integers within $[-1000,1000]$   and use $M = 100,000$ in \cref{eqn:ILP}.
In \cref{fig:constrainteg}, we show the constraints generated by our ILP formulation for the Hoare triple $\triple{x = y = 0 }{\textbf{while} \texttt{ (x $\geq$ 0) } \textbf{do } \texttt{\{x $\gets$ x + y\}}}{False}$ from \citet{synergy}. We use the data in \cref{tab:constdata} to list our constraints. 
\begin{figure}[H]
\begin{table}[H]
\begin{tabular}{ccl}
\toprule
$x$ & $y$ & $\ell$ \\ 
\midrule
0 & 0 & 1\\
-1 & $\top$ & 0\\
\bottomrule
\end{tabular}
\caption{Execution data.}
\label{tab:constdata}
\end{table}
\begin{minipage}{0.35\textwidth}
\centering
\begin{gather*}\label{eqn:constraints}
\text {Constraints for \cref{eqn:polyconst}}\\
-M(1 - \z_{111}) < 0*\w_{111} + 0*\w_{112} \leq \z_{111} \\
-M(1 - \z_{121}) < 0*\w_{122} + 0*\w_{122} \leq \z_{112} \\
-M(1 - \z_{111}) < -1*\w_{112}  \leq \z_{211} \\
-M(1 - \z_{121}) < -1*\w_{111}  \leq \z_{212} \\
\z_{111}, \z_{112}, \z_{211}, \z_{212} \in \{0,1\}\\
\text {Constraints for \cref{eqn:disconst}}\\
-M(1 - \ylor_{11}) < \z_{111} + \z_{112} \leq M\ylor_{11}\\
-M(1 - \ylor_{21}) < \z_{211} + \z_{212} \leq M\ylor_{21}\\
\ylor_{11}, \ylor_{21} \in \{0,1\}\\
\text {Constraints for \cref{eqn:conconst}}\\
\ylor_{11} \geq 1 \\
\ylor_{21} \leq 0 
\end{gather*}
\end{minipage}%
\hfill
\begin{minipage}{0.55\textwidth}
\centering
\begin{gather*}
\text {Constraints for \cref{eqn:ILP}}\\
\w_{111} =  \w^+_{111} + \w^-_{111}\\
\w_{112} =  \w^+_{112} + \w^-_{112}\\
\w_{121} =  \w^+_{121} + \w^-_{121}\\
\w_{122} =  \w^+_{122} + \w^-_{122} \\ 
1 - M(1 -\mathbf{\mu}_{1}) < \w^+_{111} + \w^-_{111} \leq M\mathbf{\mu}_{1}\\
1 - M(1 -\mathbf{\mu}_{1}) < \w^+_{121} + \w^-_{121} \leq M\mathbf{\mu}_{1}\\
1 - M(1 -\mathbf{\mu}_{2}) < \w^+_{112} + \w^-_{112} \leq M\mathbf{\mu}_{2}\\
1 - M(1 -\mathbf{\mu}_{2}) < \w^+_{122} + \w^-_{122} \leq M\mathbf{\mu}_{2}\\
\mathbf{\mu}_{1},\mathbf{\mu}_{2} \in \{0,1\}\\
\text{Objective function to minimize}\\
\min_{\w, \w^+, \w^-, \z, \ylor, \mathbf{\mu}} \w^+_{111} + \w^-_{111} + \w^+_{121} + \w^-_{121} \\
        + \w^+_{112} + \w^-_{112} + \w^+_{122} + \w^-_{122}  + \lambda(\mathbf{\mu}_{1} + \mathbf{\mu}_{2}) 
\end{gather*}
\end{minipage}
\caption{Example to show the constraints generated by our ILP formulation.}
\label{fig:constrainteg}
\end{figure}

\section{Experimental Evaluation}%
\label{sec:evaluation}

We have implemented \oasis{} using the \lig~\citep{pie} framework in \sname{OCaml},
and using \sname{Z3}~\citep{z3} as the theorem prover for checking validity of the verification conditions.
We implement our technique for reducing the classification problem to ILPs in a \sname{Python} script,
which discharges the ILP subproblems to the \sname{OR-Tools}~\citep{web/or-tools} optimization package. \AlgoStyle{\FindPosCounterExample}\ and \AlgoStyle{\FindNegCounterExample}\ procedures in \cref{algo:refinement} are implemented in a python script and we use \sname{Z3}~\citep{z3} to solve the constraints.
We evaluate \oasis{} on commodity hardware --- a CPU-only machines with 2.5GHz Intel Xeon processor, 32\,GB RAM, and running Ubuntu Linux 18.04. 

\textbf{Solvers.}
We compare \oasis{}, against three tools:
\begin{andlist}
  \item \lig~\cite{pie} which uses data-driven invariant inference
  \item \cvc~\cite{cvc4,cegqi} which uses a refutation-based approach
  \item \dsynth~\cite{dryadsynth} which uses a combination of enumerative and deductive synthesis (cooperative synthesis).
\end{andlist}
\cvc and \lig are respectively the winners of the invariant-synthesis (Inv) track of \syguscomp['19]~\citep{syguscomp19} and \syguscomp['18]~\citep{syguscomp18}. Recently, \citet{dryadsynth} showed that their cooperative synthesis technique is able to perform better than \lig and \cvc on invariant synthesis tasks.

\begin{table}[h!]
\centering
 \begin{tabular}{|c c c c c|} 
 
 \hline
 \textbf{\relscale{1.1} Benchmark} & \multicolumn{3}{c}{\makecell{\bf \# Variables}}  & \makecell{\bf \# Instances }\\ 
 \hline\hline
 & \textbf{Median} & \textbf{Average} & \textbf{Maximum} & \\
\hline
 \sygus & 3 & 4  & 9 & 127 \\
  \hline
 \unconfounded & 10 & 11 & 22 & 92 \\ 
 \hline
 \confoundedone & 15 & 16 & 32 & 92\\
 \hline
 \confoundedfive & 25 & 26 & 44 & 92\\
 \hline

 \end{tabular}
 \caption{Statistics of the 403 SyGuS instances used for evaluation.}
\label{table:varstats}
\end{table}
\textbf{Benchmarks.}
We evaluate our technique on 403 instances which were part of the \syguscomp['19]~\citep{syguscomp19} and also studied by ~\citet{dryadsynth}. All these instances require reasoning over linear arithmetic. Out of these 403 instances, 276 were published by \citet{code2inv} and 127 were part of the \syguscomp['18]. The 276 instances are divided into three groups of 92 instances each \begin{andlist}
  \item \unconfounded{}
  \item \confoundedone{}
  \item \confoundedfive{}.
\end{andlist}
\confoundedone{} and \confoundedfive{} instances~\cite{code2inv} were obtained by adding irrelevant variables to each of the \unconfounded{} instances. The number of irrelevant variables ranges from 4-9 in \confoundedone{} and from 12-23 in \confoundedfive{}. In \cref{table:varstats}, we give key statistics of these benchmarks. These instances are provided as a collection of logic formulas representing the VCs (\Cref{sec:background.verification}) in the SyGuS grammar~\cite{sygusgrammar}.

\begin{minipage}{.5\textwidth}
\begin{table}[H]
\begin{tabular}{|c c|} 
 
 \hline
 \textbf{\relscale{1.1} Tool} & \makecell{\bf Solved \\ \smaller{\bf (out of 403)}}  \\ 
 \hline\hline
 \cvc & 287 \\ 
 \hline
 \dsynth & 346 \\
 \hline
 \lig & 272  \\
 \hline
 \oasis & \textbf{353} \\  
 \hline
 \end{tabular}

\caption{Comparison of \oasis{} with SyGuS tools on the 403 instances which were part of the \syguscomp['19]~\citep{syguscomp19} invariant synthesis track.}
\label{table:sygus_results}
\end{table}
\end{minipage}
\hfill
\begin{minipage}{.4\textwidth}
\begin{table}[H]
\centering
 \begin{tabular}{|c c|} 
 
 \hline
 \textbf{\relscale{1.1} Tool} & \makecell{\bf Solved} \\ 
 \hline\hline
  \cvc & 0\\
  \hline
 \dsynth & 11 \\ 
 \hline
 \lig & 1\\
 \hline
 \oasis & \textbf{13}\\
 \hline

 \end{tabular}
 \caption{Number of uniquely solved instances by each tool.}
\label{table:unqiue}
\end{table}
\end{minipage}  

\subsection{Results on SyGuS Benchmarks }
\label{sec:res_sygus}
\textbf{Comparison with SyGuS Competitors. }%
We report the number of instances each tool solves with a timeout of 30 minutes\footnote{\citep{dryadsynth} uses a timeout of 30 minutes and we keep the same timeout.} in \cref{table:sygus_results}. \oasis{} synthesizes sufficient loop invariants on \textbf{353} instances, \textbf{7} more than the second best tool and \textbf{66} more than \cvc, the winner of invariant-synthesis (Inv) track of \syguscomp['19]. \oasis{} is able to solve \textbf{13} instances which no other tool can solve. In \cref{table:unqiue}, we list the number of unique benchmarks each tool solves. Out of the 353 instances that \oasis{} solves, 262 instances had disjunctive invariants. 

\begin{table}[h!]
\centering
\resizebox{\textwidth}{!} {
 \begin{tabular}{|c c c c c|} 
 
 \hline
 \textbf{\relscale{1.1} Tool} & \makecell{\bf \unconfounded \\ \smaller{\bf (out of 92)}} & \makecell{\bf \confoundedone \\ \smaller{\bf (out of 92)}} & \makecell{\bf \confoundedfive \\ \smaller{\bf (out of 92)}} & \makecell{\bf Solved \\ \smaller{\bf (out of 276)}} \\ 
 \hline\hline
 \lig & 62 & 59 & 44 & 165 \\
 \hline
 \oasis & 84 & 84 & 67 & \textbf{235} \\  
 \hline
 \end{tabular}
 }
 \caption{Comparison of \oasis{} and \lig, the state-of-the-art data driven tool, on the 276 instances which were part of the \syguscomp['19]~\citep{syguscomp19} and studied by \citet{code2inv}. These 276 instance are more complex than the remaining 127 instances in terms of the number of variables present in them. These results indicate \oasis{} scales than better \lig on program with large number of variables.} 
 
\label{table:confound_results}
\vspace{-20pt}
\end{table}
  
\textbf{Comparison with Data-Driven Tools. }%
\oasis solves 81 more benchmarks than \lig, which is the state-of-the-art data-driven invariant inference tool, the winner of \syguscomp['18]~\citep{syguscomp18},
and runner up of 
\syguscomp['19]~\citep{syguscomp19}. 
 In \cref{table:confound_results}, we give break down of the number of instances \lig and \oasis{} solves in each category of the 276 instances from~\citet{code2inv} to show how the complexity of benchmarks affects the performance of data-driven tools. \oasis{} solves 70 more benchmarks than \lig, indicating that \oasis{} scales better to programs with large number of variables. Recently, \citet{code2inv} (code2inv) and \citet{cln2inv} (cln2inv) propose  neural network based approaches for inferring invariants. We evaluate these two tools on \unconfounded{} instances\footnote{%
  \confoundedone{} and \confoundedfive{} instances  are only available as logic formulas and code2inv/cln2inv require C files as input. It is not straightforward to translate the constraints to C while maintaining a fair comparison.}, code2inv solves 64 and cln2inv solves 86 instances within 30 minutes.  \citet{icedt} and \citet{ddchc} are two other data-driven invariant inference tools.  From \cref{table:varstats}, the complexity of the 127 instances from \syguscomp['18] is lower than that of the 276 instances from~\citet{code2inv}. The 127 instances from \syguscomp['18]  subsumes the benchmarks these two tools were evaluated on and \oasis{} can solve all the instances that they succeeded on. 
  
  In \cref{fig:time_analysis}, we plot and compare the solving time for \oasis{} and \lig{} on the 403 instances. \oasis{} is slower that \lig on most benchmarks because
  it runs many \sname{Z3} queries to refine the set of relevant variables.
 Identifying relevant variables helps \oasis{} scale to programs with large number of variables and \oasis{} solves fewer instances without it (\Cref{sec:ablation}).  
\begin{figure}[H]
\includegraphics[width=\textwidth,height=\textheight,keepaspectratio, trim=0 8.5cm 0 9cm, clip]{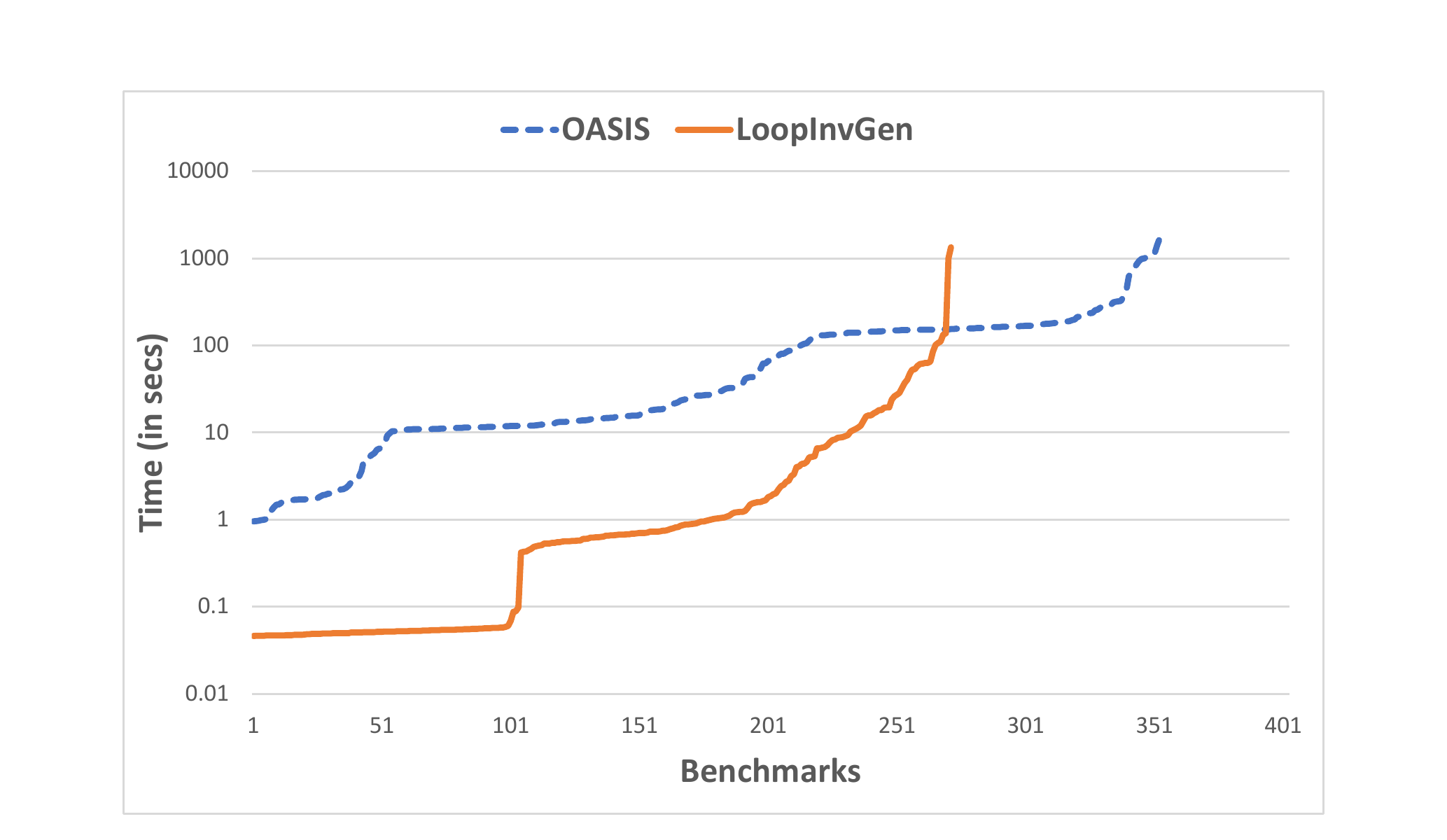}
\caption{Solving time comparison of \oasis{} and \lig on the 403 SyGuS instances.}
\label{fig:time_analysis}
\end{figure}

\begin{scriptsize}

 \begin{longtable}{|>{\centering\arraybackslash}p{1.75cm} >{\raggedleft\arraybackslash}p{1.65cm}  >{\raggedleft\arraybackslash}p{1.65cm} >{\raggedleft\arraybackslash}p{1.65cm} >{\raggedleft\arraybackslash}p{1.75cm} >{\raggedleft\arraybackslash}p{1.65cm} >{\raggedleft\arraybackslash}p{1.25cm}|}
 \caption{Details of \oasis{} invariant synthesis time, \lig invariant synthesis time, total variables in the instance,  number of relevant variable used by \oasis{} in the successful run of \AlgoStyle{\RelInfer}, number of variables in the invariant of the corresponding \unconfounded{} instance (Gold Solution) and size of the synthesized invariant for \confoundedone{} and \confoundedfive{} instances solved by \oasis{}. A `-' indicates that  \lig times out on that instance. }\label{table:xc_stats}\\
 \hline
 
 Benchmark &  \oasis{} Time &  \lig Time & \# Variables & \# Relevant Variables & Gold Solution & Size \\
 \hline\hline
1\textunderscore conf1 & 32.46 & - & 15 & 10 & 2 & 28 \\
\hline
2\textunderscore conf1 & 20.83 & - & 15 & 6 & 2 & 28 \\
\hline
3\textunderscore conf1 & 142.35 & - & 18 & 9 & 3 & 11 \\
\hline
5\textunderscore conf1 & 747.97 & - & 20 & 11 & 3 & 11 \\
\hline
10\textunderscore conf1 & 54.31 & 0.68 & 14 & 6 & 1 & 24 \\
\hline
11\textunderscore conf1 & 13.8 & 0.75 & 14 & 6 & 1 & 24 \\
\hline
12\textunderscore conf1 & 41.67 & 0.69 & 14 & 6 & 1 & 24 \\
\hline
13\textunderscore conf1 & 15.58 & 0.73 & 14 & 6 & 1 & 24 \\
\hline
15\textunderscore conf1 & 156.51 & - & 20 & 9 & 2 & 32 \\
\hline
16\textunderscore conf1 & 191.54 & - & 20 & 9 & 2 & 7 \\
\hline
17\textunderscore conf1 & 322.36 & - & 20 & 14 & 2 & 32 \\
\hline
18\textunderscore conf1 & 364.25 & - & 20 & 9 & 2 & 7 \\
\hline
23\textunderscore conf1 & 76.15 & - & 15 & 6 & 2 & 60 \\
\hline
24\textunderscore conf1 & 26.3 & - & 15 & 6 & 2 & 60 \\
\hline
28\textunderscore conf1 & 67.35 & - & 11 & 5 & 2 & 7 \\
\hline
29\textunderscore conf1 & 272.63 & - & 11 & 8 & 2 & 11 \\
\hline
38\textunderscore conf1 & 140.04 & 12.01 & 16 & 7 & 1 & 3 \\
\hline
40\textunderscore conf1 & 158.92 & 0.97 & 16 & 6 & 1 & 3 \\
\hline
41\textunderscore conf1 & 159.94 & 0.96 & 16 & 7 & 2 & 19 \\
\hline
42\textunderscore conf1 & 620.74 & 11.46 & 16 & 6 & 1 & 3 \\
\hline
43\textunderscore conf1 & 158.4 & 4.39 & 16 & 5 & 1 & 3 \\
\hline
44\textunderscore conf1 & 157.74 & 5.23 & 16 & 5 & 1 & 3 \\
\hline
45\textunderscore conf1 & 163.14 & 18.16 & 16 & 7 & 1 & 3 \\
\hline
46\textunderscore conf1 & 272.05 & - & 16 & 7 & 2 & 18 \\
\hline
47\textunderscore conf1 & 159.41 & 19.41 & 16 & 7 & 1 & 3 \\
\hline
48\textunderscore conf1 & 158 & 11.01 & 16 & 7 & 1 & 3 \\
\hline
49\textunderscore conf1 & 158.05 & 15.35 & 16 & 7 & 1 & 3 \\
\hline
56\textunderscore conf1 & 234.26 & - & 16 & 6 & 2 & 7 \\
\hline
57\textunderscore conf1 & 168.95 & 0.82 & 16 & 6 & 2 & 7 \\
\hline
65\textunderscore conf1 & 11.46 & 0.05 & 14 & 6 & 2 & 22 \\
\hline
71\textunderscore conf1 & 140.38 & 110.69 & 21 & 4 & 1 & 3 \\
\hline
77\textunderscore conf1 & 165.67 & 0.75 & 16 & 8 & 2 & 7 \\
\hline
78\textunderscore conf1 & 141.53 & 37.24 & 16 & 6 & 1 & 3 \\
\hline
79\textunderscore conf1 & 140.24 & 32.57 & 16 & 6 & 1 & 3 \\
\hline
91\textunderscore conf1 & 14.82 & - & 12 & 5 & 2 & 7 \\
\hline
94\textunderscore conf1 & 80.46 & - & 19 & 10 & 4 & 14 \\
\hline
95\textunderscore conf1 & 177.43 & 1.36 & 20 & 10 & 3 & 53 \\
\hline
96\textunderscore conf1 & 186.7 & 0.89 & 20 & 10 & 2 & 53 \\
\hline
97\textunderscore conf1 & 15.91 & 0.69 & 20 & 5 & 1 & 3 \\
\hline
98\textunderscore conf1 & 14.77 & 0.68 & 20 & 5 & 1 & 3 \\
\hline
99\textunderscore conf1 & 86.83 & 1.86 & 17 & 8 & 3 & 39 \\
\hline
100\textunderscore conf1 & 26.67 & - & 17 & 8 & 3 & 30 \\
\hline
103\textunderscore conf1 & 10.91 & 0.05 & 9 & 3 & 1 & 19 \\
\hline
107\textunderscore conf1 & 281.15 & - & 21 & 9 & 3 & 11 \\
\hline
108\textunderscore conf1 & 140.97 & 1.08 & 23 & 5 & 2 & 7 \\
\hline
109\textunderscore conf1 & 324.32 & - & 23 & 11 & 3 & 11 \\
\hline
110\textunderscore conf1 & 54.29 & 8.79 & 17 & 8 & 2 & 63 \\
\hline
111\textunderscore conf1 & 144.44 & 9.14 & 17 & 8 & 2 & 63 \\
\hline
114\textunderscore conf1 & 13.79 & 1.06 & 16 & 6 & 1 & 19 \\
\hline
115\textunderscore conf1 & 15.71 & 0.86 & 16 & 6 & 1 & 19 \\
\hline
118\textunderscore conf1 & 15.71 & 8.75 & 17 & 8 & 2 & 63 \\
\hline
119\textunderscore conf1 & 18.63 & 8.86 & 17 & 8 & 2 & 63 \\
\hline
120\textunderscore conf1 & 21.32 & - & 15 & 6 & 2 & 56 \\
\hline
121\textunderscore conf1 & 25.12 & - & 15 & 6 & 2 & 56 \\
\hline
124\textunderscore conf1 & 101.6 & - & 19 & 10 & 4 & 47 \\
\hline
125\textunderscore conf1 & 32.25 & - & 19 & 10 & 4 & 47 \\
\hline
130\textunderscore conf1 & 114.27 & - & 32 & 12 & 3 & 11 \\
\hline
131\textunderscore conf1 & 104.58 & - & 32 & 12 & 3 & 11 \\
\hline
132\textunderscore conf1 & 1087.6 & 1.59 & 26 & 14 & 1 & 3 \\
\hline
1\textunderscore conf5 & 62.36 & - & 24 & 9 & 2 & 14 \\
\hline
2\textunderscore conf5 & 25.22 & - & 24 & 6 & 2 & 14 \\
\hline
10\textunderscore conf5 & 23.32 & 1.97 & 23 & 6 & 1 & 7 \\
\hline
11\textunderscore conf5 & 27.18 & 1.82 & 23 & 6 & 1 & 7 \\
\hline
12\textunderscore conf5 & 73.78 & 2.52 & 23 & 6 & 1 & 7 \\
\hline
13\textunderscore conf5 & 27.07 & 2.78 & 22 & 6 & 1 & 7 \\
\hline
16\textunderscore conf5 & 1131.16 & - & 29 & 9 & 2 & 7 \\
\hline
18\textunderscore conf5 & 215.43 & - & 29 & 9 & 2 & 7 \\
\hline
24\textunderscore conf5 & 849.02 & - & 24 & 8 & 2 & 41 \\
\hline
25\textunderscore conf5 & 14.68 & - & 17 & 3 & 1 & 3 \\
\hline
28\textunderscore conf5 & 1001.83 & - & 19 & 10 & 2 & 7 \\
\hline
30\textunderscore conf5 & 13.48 & - & 17 & 3 & 1 & 3 \\
\hline
36\textunderscore conf5 & 189.78 & 1.03 & 24 & 6 & 1 & 3 \\
\hline
38\textunderscore conf5 & 152.22 & 28.63 & 26 & 7 & 1 & 3 \\
\hline
40\textunderscore conf5 & 166.24 & 15.81 & 26 & 6 & 1 & 3 \\
\hline
41\textunderscore conf5 & 174.61 & 2.74 & 26 & 7 & 2 & 7 \\
\hline
42\textunderscore conf5 & 165.92 & 40.38 & 26 & 6 & 1 & 3 \\
\hline
43\textunderscore conf5 & 163.34 & 13.57 & 26 & 5 & 1 & 3 \\
\hline
44\textunderscore conf5 & 180.09 & 58.54 & 24 & 5 & 1 & 3 \\
\hline
45\textunderscore conf5 & 171.74 & 62.23 & 26 & 7 & 1 & 3 \\
\hline
46\textunderscore conf5 & 178.62 & - & 26 & 7 & 2 & 27 \\
\hline
47\textunderscore conf5 & 167.27 & 52.93 & 26 & 7 & 1 & 3 \\
\hline
48\textunderscore conf5 & 169.15 & 63.3 & 26 & 7 & 1 & 3 \\
\hline
49\textunderscore conf5 & 165.85 & 53.48 & 24 & 7 & 1 & 3 \\
\hline
50\textunderscore conf5 & 164.16 & 1.54 & 24 & 5 & 1 & 3 \\
\hline
51\textunderscore conf5 & 199.07 & 1.23 & 24 & 7 & 1 & 3 \\
\hline
56\textunderscore conf5 & 177.1 & 26.27 & 26 & 6 & 2 & 7 \\
\hline
57\textunderscore conf5 & 700.96 & 1.49 & 24 & 6 & 2 & 7 \\
\hline
63\textunderscore conf5 & 90.45 & - & 23 & 6 & 2 & 7 \\
\hline
64\textunderscore conf5 & 84.12 & - & 23 & 8 & 2 & 7 \\
\hline
65\textunderscore conf5 & 122.15 & - & 23 & 9 & 2 & 7 \\
\hline
67\textunderscore conf5 & 88 & - & 25 & 8 & 2 & 7 \\
\hline
68\textunderscore conf5 & 289.74 & - & 25 & 8 & 3 & 14 \\
\hline
70\textunderscore conf5 & 122.22 & - & 25 & 8 & 3 & 11 \\
\hline
71\textunderscore conf5 & 146.19 & 61.43 & 29 & 4 & 1 & 3 \\
\hline
77\textunderscore conf5 & 171.97 & 1.23 & 24 & 8 & 2 & 7 \\
\hline
78\textunderscore conf5 & 151.23 & 102.33 & 24 & 6 & 1 & 3 \\
\hline
79\textunderscore conf5 & 183.15 & - & 24 & 6 & 1 & 3 \\
\hline
83\textunderscore conf5 & 236.27 & - & 23 & 6 & 2 & 7 \\
\hline
84\textunderscore conf5 & 32.48 & - & 23 & 6 & 2 & 7 \\
\hline
85\textunderscore conf5 & 162.42 & - & 23 & 6 & 2 & 7 \\
\hline
91\textunderscore conf5 & 24.29 & 1.02 & 20 & 5 & 2 & 7 \\
\hline
94\textunderscore conf5 & 185.83 & - & 27 & 10 & 4 & 17 \\
\hline
95\textunderscore conf5 & 259.72 & 24.03 & 29 & 10 & 3 & 53 \\
\hline
96\textunderscore conf5 & 215.47 & 47.57 & 29 & 10 & 3 & 53 \\
\hline
97\textunderscore conf5 & 35.41 & 0.8 & 29 & 5 & 1 & 3 \\
\hline
98\textunderscore conf5 & 26.88 & 0.7 & 29 & 5 & 1 & 3 \\
\hline
99\textunderscore conf5 & 42.8 & 2.01 & 26 & 8 & 3 & 39 \\
\hline
100\textunderscore conf5 & 180.4 & - & 26 & 8 & 3 & 27 \\
\hline
101\textunderscore conf5 & 147.21 & - & 19 & 5 & 2 & 11 \\
\hline
102\textunderscore conf5 & 106.3 & - & 19 & 5 & 2 & 11 \\
\hline
103\textunderscore conf5 & 18.53 & 0.95 & 17 & 6 & 1 & 3 \\
\hline
107\textunderscore conf5 & 162.78 & - & 30 & 9 & 3 & 11 \\
\hline
108\textunderscore conf5 & 149.05 & 3.31 & 32 & 5 & 2 & 7 \\
\hline
114\textunderscore conf5 & 23.81 & 1.05 & 25 & 6 & 1 & 31 \\
\hline
115\textunderscore conf5 & 22.33 & 0.93 & 25 & 6 & 1 & 31 \\
\hline
120\textunderscore conf5 & 66.52 & - & 24 & 6 & 2 & 37 \\
\hline
124\textunderscore conf5 & 1068.72 & - & 28 & 10 & 4 & 47 \\
\hline
128\textunderscore conf5 & 21.2 & 18.11 & 19 & 5 & 1 & 3 \\
\hline
132\textunderscore conf5 & 1681.38 & 5.35 & 38 & 14 & 1 & 3 \\
\hline
133\textunderscore conf5 & 17.97 & 27.28 & 19 & 5 & 2 & 7 \\
\hline

\end{longtable}

\end{scriptsize}
\textbf{Analysis.}
In \cref{table:xc_stats}, we give detailed statistics of 120 instances which are \confoundedone{} and \confoundedfive{} instances that \oasis solves\footnote{%
  We don't include the instances where the post-condition is a sufficient invariant.}. \oasis{} takes 183.77 seconds on average for invariant synthesis. \lig times out on 51 instances and averages 15.36 seconds on the 69 instances it solves. We give the total number of variables in the instance, the number of relevant variables \oasis{} uses to synthesize the invariant and the number of variables in the gold solution, \ie, the number of variables appearing in the invariant of the corresponding \unconfounded{} instance. \oasis{} reduces the number of variables it uses to solve the problem by $3\times$ on average. Size of the invariant is computed as the number of nodes in the SyGuS AST~\cite{sygusgrammar} of the invariant. 

\subsection{Ablation Study}
\label{sec:ablation}

In \Cref{sec:res_sygus}, we saw that \oasis{} solves more instances than any other tool. Now, through this study, we try to answer the following questions about \oasis:
\begin{enumerate}
    \item \textit{Does identifying relevant variables really help?}
    \item \textit{Is our ILP formulation better than other techniques?}
    \item \textit{Is our relevant variable identifying algorithm by itself sufficient to guess invariants?}
\end{enumerate}

\begin{table}[h!]
\centering
{
 \begin{tabular}{|c c|} 
 
 \hline
 \textbf{\relscale{1.1} Tool} & \makecell{\bf Solved \\ \smaller{\bf (out of 403)}}  \\ 
 \hline\hline
 \oasiscomp & 326 \\
 \hline
 \oasis & \textbf{353}  \\  
 \hline
 \end{tabular}
 }
 \caption{Comparison of \oasis{} and \oasis{} without our relevant variable identifying algorithm. \oasiscomp uses all the variables appearing in the program as relevant variables.}
\label{table:ablation_act}
\end{table}

To show that inference of relevant variables helps in solving more instances, we compare our tool, \oasis{}, against the following configuration:
\begin{enumerate}
\item \textbf{\oasiscomp{}.} We don't use our relevant variables identifying algorithm, \ie, we don't execute lines 1-3, thread 1 and thread 2 in \cref{algo:oasis}. We use all the variables appearing in the program to infer the invariant. We still use our ILP formulation (\Cref{sec:learning}) for the \AlgoStyle{\LearnClassifier}\ function in \cref{algo:relevant-infer}.
\end{enumerate}
We observe from \cref{table:ablation_act} that 
inferring relevant variables helps solve 27  more instances.  Moreover, these results also show that out of the 81 benchmarks that \oasis solves more than \lig, 54 are because of replacing the exhaustive enumeration-based feature synthesizer in \lig by ILP and 27 are because of the ILP-based relevant variable inference. 

\begin{table}[h!]
\centering
{
 \begin{tabular}{|c c|} 
 
 \hline
 \textbf{\relscale{1.1} Tool} & \makecell{\bf Solved \\ \smaller{\bf (out of 403)}}  \\ 
 \hline\hline
 \oasiscl & 258   \\  
 \hline
 \oasisdt & 338  \\  
 \hline
 \oasiscomplete & 336  \\  
 \hline
 \oasisopt & 297   \\  
 \hline
 \oasis & \textbf{353}  \\  
 \hline
 \end{tabular}
 }
 \caption{Comparison of \oasis{} and \oasis{} with our ILP formulation replaced with naive enumeration, decision tree, ILP without objective function and ILP with complete maps instead of partial maps.}
\label{table:ablation_learn}
\end{table}

Next, to show that our ILP formulation is better suited for inferring invariants, we compare \oasis{} to the following configurations:

\begin{figure}
\begin{algorithmic}[1]
        \small\linespread{1}\selectfont
    \FunctionX{\oasiscl}{$\langle \PreRel,\TransRel,\PostRel \rangle$ : Verification Problem, $\PosPStates$ : States, $\NegPStates$ : States, $\textnormal{Vars}$: Variables in Problem}
     \For{\textbf{each} subset $s$ of $\textnormal{\sc Vars}$}
     
        \State Variables $\vec{r} \gets s$
        \State $\InvRel \gets \Call{\RelInfer}{\langle \PreRel,\TransRel,\PostRel \rangle, \PosPStates, \NegPStates, \vec{r}} \, \big\vert_{\,\textsf{timeout = }\tau}$
        \IfThen {$\InvRel \neq \bot$}
                \Return $\InvRel$
     \EndFor
    \EndFunction
\end{algorithmic}
\caption{Implementation of \oasis{} with naive enumeration strategy. $\vec{r}$ is the set of relevant variables.}
\label{fig:naive}
\end{figure}
\begin{enumerate}
\item \label{itm:first} \textbf{\oasiscl{}.}  We use an exhaustive enumeration strategy to find the set of relevant variables instead of using our ILP formulation (\Cref{sec:learning}) to identify this set. We replace our \cref{algo:oasis} with the implementation in \cref{fig:naive}. 
\item \label{itm:second} \textbf{\oasisdt{}.} We use scikit-learn \cite{web/dt-scikit} implementation of decision tree in place of our ILP formulation (\Cref{sec:learning}) for the \AlgoStyle{\LearnClassifier}\ function in \cref{algo:oasis} to find the set of relevant variables. There is no constraint on the height of the tree that the decision tree algorithm can learn.
\item \label{itm:third} \textbf{\oasiscomplete{}.} Instead of using the partial maps returned by \sname{Z3} while sampling states in \cref{algo:refinement}, we use complete maps. Partial maps are completed by replacing the don't care values with random integers.
\item \label{itm:fourth} \textbf{\oasisopt{}.}  We ignore the objective function used in \cref{eqn:ILP}, which biases our learner towards  simple  classifiers, and instead only search for solutions that satisfy the constraints generated by our ILP formulation. This learner with no optimization is used for the \AlgoStyle{\LearnClassifier}\ function in both \cref{algo:oasis} and \cref{algo:relevant-infer}.
\end{enumerate}
    Again, in the configurations \ref{itm:first}, \ref{itm:second} and \ref{itm:third}, we still use our ILP formulation (\Cref{sec:learning}) for the \AlgoStyle{\LearnClassifier}\ function in \cref{algo:relevant-infer}.\\

We observe from \cref{table:ablation_learn} that using a simple strategy like enumerating over combinations of variables is not adequate to find the set of relevant variables. Also, this strategy is not scalable for problems with a large number of variables. Decision trees have been widely used for classification tasks \cite{ddchc,icedt}. Our ILP formulation performs slightly better than decision trees because of the objective function which has specific penalties to learn simple and generalizable expressions. We also observe that running our ILP formulation without the objective function results in expressions without any constraints on the coefficients of the variables and the size of expression and we solve far less number of instances than running with the objective function. This shows that solving only the search problem in our ILP formulation is insufficient for generalization and learning expressions consistent with the Occam's razor principle helps in solving more instances.
 \begin{table}[H]
\centering
{
 \begin{tabular}{|c c|} 
 
 \hline
 \textbf{\relscale{1.1} Tool} & \makecell{\bf Solved \\ \smaller{\bf (out of 403)}}  \\ 
 \hline\hline
 \oasisact & 82 \\
 \hline
 \oasis & \textbf{353}  \\  
 \hline
 \end{tabular}
 }
 \caption{Comparison of \oasis{} and \oasis{} without our relevance-aware invariant inference algorithm (thread 3) in \cref{algo:oasis}.}
\label{table:ablation_lig}
\end{table}

Finally, one might think that the classifier that separates reachable and bad states during the inference of relevant variables might be a good guess for a sufficient loop invariant. To show that this classifier is usually not an invariant, we compare \oasis{} against the following configuration:
\begin{enumerate}
\item \textbf{\oasisact{}.} We don't run \AlgoStyle{\RelInfer}, \ie, thread 3 in \cref{algo:oasis}. We use the classifier \Classifier{} returned by the \AlgoStyle{\LearnClassifier}\ function in \cref{algo:oasis} as our invariant guess. 
\end{enumerate}
From \cref{table:ablation_lig}, we observe that the classifier learnt between the good and bad states in \cref{algo:oasis} is not usually a sufficient invariant. However, it is still an indicator of the relevant variables which appear in the sufficient invariant. 

\section{Related Work}%
\label{sec:related}
Loop invariant inference is a challenging problem with a long history. Although, we focus on numerical invariants in this paper, invariant inference over practical programs can be reduced to numerical reasoning~\cite{slam}. The existing techniques for numerical loop invariants can be classified in two categories: those that are purely static and infer invariants from program text and data-driven approaches that guess invariants from examples of program states. The traditional static approaches for inferring loop invariants include
abstract interpretation~\citep{astree,cousot77},
predicate abstraction~\citep{slam,yogi}, interpolation~\citep{blast,jm06},
constraint solving~\citep{sting},
and abductive inference~\citep{hola,abduct}.
Although these approaches are mature and can scale to large programs, the data-driven approaches are more recent and the scalability is currently limited. However, data-driven invariant inference techniques (\eg,~\citet{ice,pie}) have been shown to outperform static approaches for verification of small but non-trivial loops. 

\oasis reduces the problem of invariant inference to solving a constrained ILP problem, where a solver minimizes a penalty while maintaining the feasibility of constraints. 
Similar to us,~\cite{ssd,sting,pldi08} also reduce invariant inference to constraint solving. However, their constraints are non-linear and much harder to solve. Subsequently,~\citet{invgen} use data to make these constraints linear. However, this line of work either doesn't support disjunctive invariants or requires the number of disjunctions to be fixed by a user-provided template. \oasis has no such restrictions and can generate invariants that are arbitrary Boolean combinations of linear inequalities.
Moreover, these techniques only solve the feasibility problem and do not have penalty terms. Although we can encode the search component of our ILP problem as an SMT constraint~\cite{ice}, our penalty terms are effective at generalization (\cref{table:ablation_learn}). Thus, we use an optimization framework instead of a constraint solving framework like prior works. 

Unlike ~\cite{vueq,dig,esop,geometric}, which are data-driven techniques for non-linear invariants, \oasis focuses on linear invariants. This design is primarily motivated by the presence of hundreds of benchmarks from the SyGuS competition. There are only a few benchmarks for non-linear invariants and these can already be solved well by  existing techniques~\cite{dig,yaononlin}. Additionally, since \oasis{} is implemented on top of \lig, it inherits the capabilities to infer invariants for multiple loops and nested loops from~\citet{pie}. 

Prior work on data-driven techniques to infer arbitrary Boolean combinations of linear inequalities have all been evaluated on  benchmarks at the scale of the SyGuS'18 benchmarks (\cref{table:varstats}) that have less than ten variables. These include techniques that use SMT solvers directly~\cite{ice}, PAC-learning~\cite{geometric}, decision trees~\cite{icedt}, SVMs~\cite{selective}, and combinations of SVMs and decision trees~\cite{ddchc}.
Techniques based on  
neural networks~\cite{code2inv,cln2inv} have been evaluated at the scale of Unconfounded benchmarks (\cref{table:varstats}). \oasis uses ILP to scale data-driven inference to succeed on benchmarks with even more variables. 

\section{Conclusions}%
\label{sec:conclusions}
\oasis makes the following contributions. Conceptually,
\oasis reduces the problem of invariant inference to learning relevant variables and learning features. Technically, \oasis provides a novel ILP-based learner which generates sparse classifiers and solves both these problems effectively. Practically, \oasis outperforms the state-of-the-art tools, including the most recent work of~\citet{dryadsynth}, on benchmarks from the invariant inference track of the Syntax Guided Synthesis competition. \oasis both solves more benchmarks and can solve benchmarks that no other tool could solve before. We are  working towards integrating \oasis with a full-fledged verification system for effective verification of complete applications.


Stepping back, the inference of loop invariants is an old problem with a rich history. Many techniques have been applied to this problem and they all have their strengths and weaknesses. Data-driven invariant inference techniques can handle challenging loops with confusing program text by applying ML techniques to mine patterns directly from data. However, these techniques have been evaluated only on loops with a small number of variables. This weakness is clear on benchmarks with large number of irrelevant variables. \oasis  uses ML  to infer the relevant variables which leads to simpler verification problems with fewer variables. We believe that this idea of simplifying the verification problems using ML is generally applicable. 
\oasis demonstrates that  ML-based simplification is effective for data-driven invariant inference and we will explore it in other contexts in the future.


\bibliography{main}

\end{document}